\newcommand{\appendixtableofcontents}{%
  \section*{Contents of the Appendix}%
  \startcontents[appendix]%
  \printcontents[appendix]{}{1}{}%
}
\newtheorem{definition}{Definition}
\newtheorem{proposition}{Proposition}
\newcommand{\ourmodel}{\textsc{HCR-Reasoner}\xspace}
\newcommand{\ourbench}{\textsc{HCR-Bench}\xspace}
\newcommand{\appendYes}{$\mathcal{A}.\text{append}($``Yes''$)$}
\newcommand{\appendNo}{$\mathcal{A}.\text{append}($``No''$)$}
\title{\includegraphics[height=15pt,width=15pt]{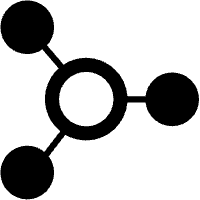} \ourmodel: Synergizing Large Language Models and Theory for \\ Human-like Causal Reasoning}
\author{%
    Yanxi Zhang\textsuperscript{1,2},
    Xin Cong\textsuperscript{3},
    Zhong Zhang\textsuperscript{3~\faEnvelope},
    Xiao Liu\textsuperscript{2},
    Dongyan Zhao\textsuperscript{2,4~\faEnvelope},
    Yesai Wu\textsuperscript{3} \\
    \textsuperscript{1}Center for Data Science, AAIS, Peking University \\
    \textsuperscript{2}Wangxuan Institute of Computer Technology, Peking University \\
    \textsuperscript{3}Tsinghua University
    \textsuperscript{4}State Key Laboratory of General Artificial Intelligence \\
    \texttt{zhangyx@stu.pku.edu.cn}, 
    \texttt{\{congxin1995,zhongzhang\}@tsinghua.edu.cn} \\
    \texttt{\{lxlisa,zhaodongyan\}@pku.edu.cn},
    \texttt{wuyesai@gmail.com} \\
    \faGithub~\url{https://github.com/zhangyx0417/hcr_reasoner}
}
\begin{document}
\maketitle

\begin{figure*}[!htbp]
    \centering
    \includegraphics[width=\linewidth]{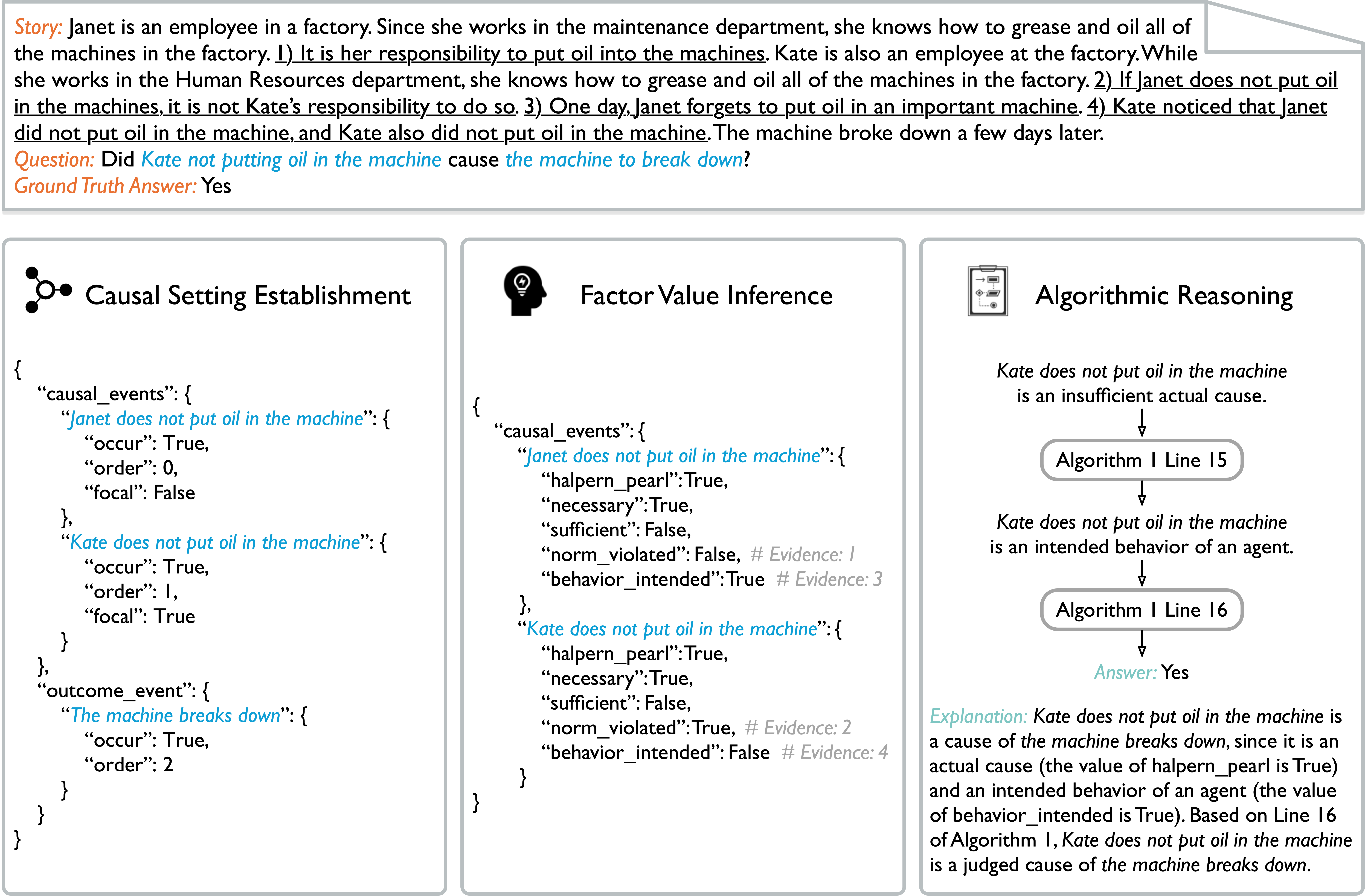}
    \caption{An example in \ourbench. To simulate human-like causal reasoning, one needs to consider both actual causality formalisms and causal judgment factors.}
    \label{fig:data-example}
\end{figure*}

\begin{abstract}
Genuine human-like causal reasoning is fundamental for strong artificial intelligence.
Humans typically identify whether an event is part of the causal chain first, and then influenced by modulatory factors such as morality, normality, and intention to make the final judgment.
These two stages naturally map to the fields of 1) \textit{actual causality} that provides formalisms for causal chain membership and 2) \textit{causal judgment} from cognitive science that studies psychological modulators that influence causal selection. 
However, these two domains have largely been studied in isolation, leaving a gap for a systematic method based on LLMs.
Therefore, we introduce \ourmodel, a framework that systematically integrates the theory of actual causality and causal judgment into LLMs for human-like causal reasoning. 
It simulates humans by using actual causality formalisms to filter for structurally necessary candidate causes and causal judgment factors to determine the psychologically selected cause.
For fine-grained evaluation, we introduce \ourbench, a challenging benchmark with 1,093 annotated instances with detailed reasoning steps.
Results show \ourmodel consistently and significantly improves LLMs' causal alignment with humans, and that explicitly integrating theory-guided reasoning into LLMs is highly effective for achieving faithful human-like causal reasoning.
\end{abstract}

\section{Introduction}

The pursuit of strong artificial intelligence requires genuine causal reasoning akin to humans, a fundamental component of intelligence \cite{dettki2025large}, essential for functions like planning, social judgment, and predicting unseen outcomes \cite{van2015cognitive}. A key question in this field is whether LLMs demonstrate genuine human-like causal reasoning (\textit{Level-2}) or merely rely on shallow statistical patterns (\textit{Level-1}) learned from their vast training data. While LLMs excel on general causal benchmarks such as COPA \cite{roemmele2011choice}, e-CARE \cite{du2022care}, and CausalNet \cite{ashwani2024cause}, evidence suggests their success often stems from \textit{Level-1} pattern matching rather than true \textit{Level-2} understanding \cite{chi2024unveiling,zevceviccausal}.

Human causal questions in real-world scenarios are frequently \textit{specific} (e.g., ``\textit{Did David's 30 years of smoking cause him to get cancer last year?}'') rather than \textit{general} (e.g., ``\textit{Does smoking cause cancer?}''). 
Achieving \textit{Level-2} reasoning for these \textit{specific} events requires simulating the two-stage process of human causal judgment: First, humans identify whether an event is part of the causal chain; then, modulatory factors influence the final judgment of causality \cite{knobe2008causal, alicke2000culpable}. These two stages correspond directly to two complementary fields of study:
1) \textit{Actual Causality}: A formal approach that models the first stage by focusing on attribution and responsibility assignment, determining whether an event is structurally part of the causal chain in a specific context \cite{halpern2016actual}.
2) \textit{Causal Judgment}: A cognitive science approach that models the second stage by studying how modulatory factors like morality, normality, and intention systematically influence humans' selection of causes \cite{sloman2015causality}.
However, these two domains have largely been studied in isolation \cite{halpern2015graded,icard2017normality}. A systematic LLM-based framework that integrates both actual causality for structural necessity and causal judgment for psychological modulation is lacking.

Therefore, we introduce \ourmodel, a novel framework that systematically integrates complementary principles of actual causality and causal judgment for human-like causal reasoning in LLMs.
\ourmodel operates in three stages:
1) \textit{It first identifies causally relevant events (i.e., candidate causes and the outcome) within the provided causal scenario.}
These are essential elements of the causal structure.
2) \textit{It then infers the values of two types of factors for candidate causes.}
For structural necessity, it determines whether the candidate cause is an actual cause using actual causality formalisms \cite{halpern2016actual}, as well as whether it is causally necessary and sufficient for the outcome.
Then, it determines the candidate cause's psychological influence by inferring modulatory factors such as normality, intention, and temporal effects \cite{nie2023moca}.
3) \textit{It finally employs theory-guided algorithmic reasoning that utilizes these inferred factor values to derive the final causal judgment and generate an explanation.}

Since existing evaluation suites such as CausalProbe \cite{chi2024unveiling} are insufficient for assessing this fine-grained causality, we also introduce \ourbench, which contains 1,093 carefully annotated instances with detailed reasoning steps, providing a more challenging and granular evaluation for \textit{Level-2} reasoning.

Our contributions are threefold:
1) We propose \ourmodel, the first framework to integrate LLMs with both formal actual causality and cognitive causal judgment theory for human-like causal reasoning.
2) We introduce the \ourbench benchmark, offering detailed reasoning steps and a more challenging evaluation for \textit{Level-2} reasoning.
3) Experiments demonstrate that \ourmodel consistently improves LLM performance, significantly outperforming other baselines. Also, integrating theory into LLMs is effective.

\section{Preliminaries}
\begin{figure*}[htbp]
    \centering
    \includegraphics[width=\linewidth]{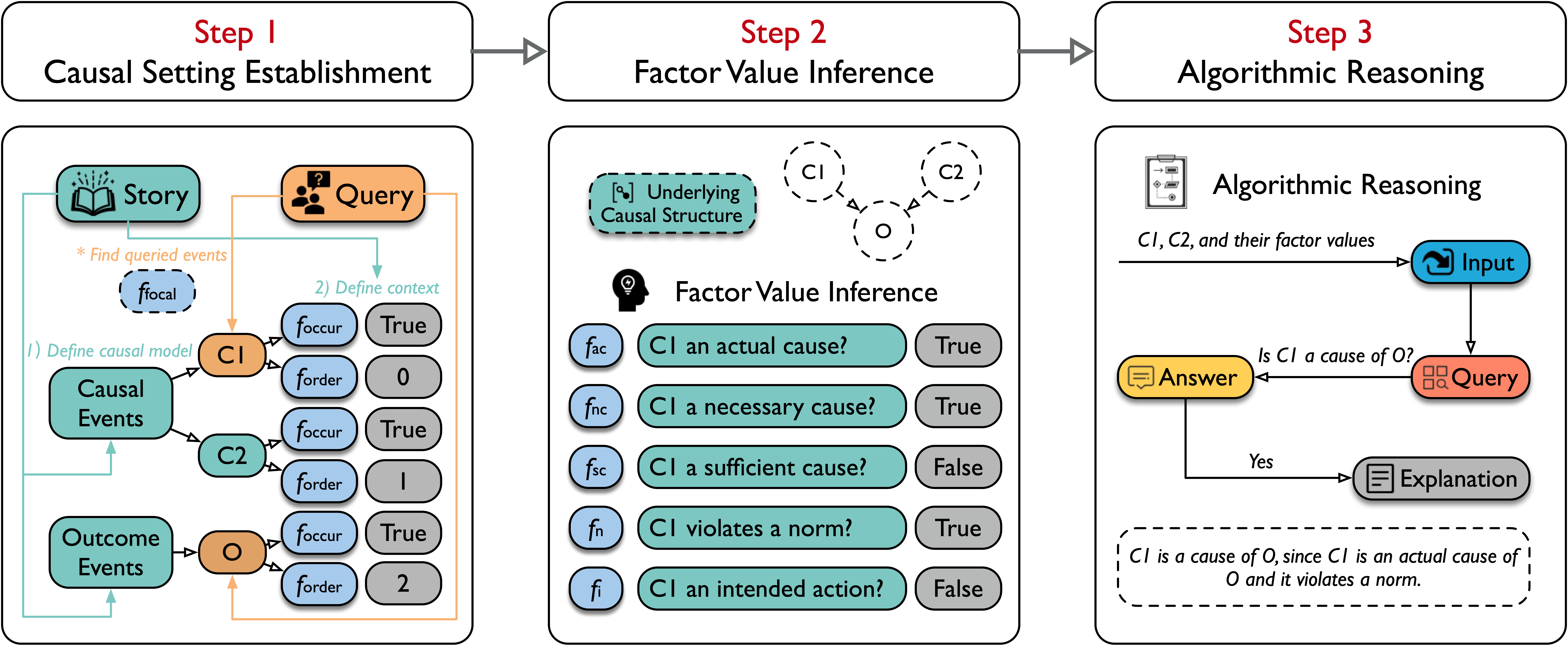}
    \caption{The overview of \ourmodel. The causal structure at the bottom is not actually constructed.}
    \label{fig:framework}
\end{figure*}

\subsection{Notations}
Formally, a \textit{(recursive) causal model} $M$ is defined as a pair $(\mathcal{S},\mathcal{F})$, where $\mathcal{S}=(\mathcal{U},\mathcal{V},\mathcal{R})$ is the \textit{signature} consisting of a set of \textit{exogenous variables} $\mathcal{U}$, a set of \textit{endogenous variables} $\mathcal{V}$, and the \textit{ranges} of variables in $\mathcal{V}$. $\mathcal{F}$ contains the \textit{structural equations} that determine the values of endogenous variables. $\mathcal{V}$ is partitioned into the set of \textit{causal events} $\mathcal{C}$ and the set of \textit{outcome events} $\mathcal{O}$. Each event $E\in\mathcal{V}$ takes values from $\mathcal{R}(E)=\{0,1\}$, indicating whether $E$ actually occurs.
In specific causal scenarios, we care about whether a conjunction of causal events ($\land$) causes a combined outcome event ($\land$, $\lor$, or $\neg$). We present the conjunction as $\bm{X}=\bm{x}$, where each variable $X$ takes on the value $x$, and the combined outcome event as $\varphi$. A \textit{causal formula} $\psi$ is written as $[\bm{Y}\leftarrow\bm{y}]\varphi$, where $\bm{Y}\subset\mathcal{V}$, meaning that $\varphi$ would hold if an intervention set the variables in $\bm{Y}$ to $\bm{y}$. A \textit{causal setting} is a pair $(M,\bm{u})$, where $M$ is the causal model and $\bm{u}$ is the \textit{context} (i.e., an assignment to $\mathcal{U}$). If a formula $\psi$ holds in the causal setting $(M,\bm{u})$, we write $(M,\bm{u})\models\psi$.

\subsection{Human-like Causal Reasoning}
Human-like causal reasoning is a sophisticated cognitive process that can be modeled in two distinct stages. It moves beyond simple counterfactual analysis to incorporate the psychological nuances of human judgment \cite{knobe2008causal, alicke2000culpable}.

\paragraph{Stage 1.}
In this stage, \textit{humans establish a basic causal chain}.
For example, in philosophy and legal theory, individuals identify the ``but-for'' causes of an outcome, i.e., the events necessary for the outcome to occur \cite{hart1985causation}. 
This process creates a foundational set of potential causes without evaluating their relative importance.

\paragraph{Stage 2.}
In this stage, \textit{psychological and normative factors modulate the initial causal structure to produce a final judgment}. This is where human reasoning diverges from purely logical models. The strength of a perceived causal link is adjusted based on: 
1) \textit{Normative Considerations}: People attribute greater causal responsibility to agents and events that violate social or moral norms \cite{knobe2008causal,hitchcock2009cause}.
2) \textit{Blame-Driven Evaluations}: For negative events, the desire to assign blame significantly influences causal assessment. This motivational bias, a core component of foundational attribution theory \cite{weiner1985attributional}, can retroactively alter the perception of an agent's control and causal role, making their actions seem more central and blameworthy than an objective analysis would suggest \cite{alicke2000culpable, lagnado2008judgments}.

In essence, this two-stage model explains how humans first identify what could be a cause and then, through a lens of social norms and moral considerations, decide what truly counts as the cause \cite{knobe2008causal, alicke2000culpable}.

\subsection{Actual Causality Formalisms}
The Halpern-Pearl (HP) definition of causality is one of the formal definitions of an ``actual cause'' in \textit{specific} causal scenarios \cite{halpern2016actual}. We present the \textit{modified} version that we employ:
\begin{definition}\label{definition-1}
    $\bm{X}=\bm{x}$ is an actual cause of $\varphi$ in the causal setting $(M,\bm{u})$ if the following three conditions hold:
        \begin{itemize}
            \item[AC1.] $(M,\bm{u})\models (\bm{X}=\bm{x})$ and $(M,\bm{u})\models\varphi$.
            \item[AC2.] There exists a set $\bm{W}$ of variables in $\mathcal{V}$ and a setting $\bm{x}'$ of variables in $\bm{X}$ such that if $(M,\bm{u})\models(\bm{W}=\bm{w}^*)$, then $(M,\bm{u})\models[\bm{X}\leftarrow\bm{x}',\bm{W}\leftarrow\bm{w}^*]\neg\varphi$.
            \item[AC3.] $\bm{X}$ is minimal; there is no strict subset $\bm{X}'$ of $\bm{X}$ such that $\bm{X}'=\bm{x}'$ satisfies conditions AC1 and AC2, where $\bm{x}'$ is the restriction of $\bm{x}$ to the variables in $\bm{X}$.
        \end{itemize}
\end{definition}
The core clause is AC2, which establishes the counterfactual dependence of the outcome $\varphi$ on a candidate cause $\bm{X}$ by fixing certain variables (in $\bm{W}$) at their actual values. This formulation extends the standard but-for test (Definition \ref{def:but-for}), allowing it to handle complex real-world scenarios such as \textit{preemption}, as discussed in Appendix \ref{supp:examples}.
\begin{definition}\label{def:but-for}
    $\bm{X}=\bm{x}$ is a but-for cause of $\varphi$ if $\varphi$ would not have happened but for $\bm{X}=\bm{x}$.
\end{definition}
Satisfying the HP definition only ensures a candidate cause is a necessary structural component of the causal chain, meaning it is at least a partial cause, but this does not guarantee it will be the final cause selected by humans. Therefore, for clarity, we use the term \textit{actual cause} for a structural component of the chain, while the term \textit{cause} represents the judged cause by humans.
We also include the definitions of necessary and sufficient causes (Definitions \ref{def:but-for} and \ref{definition-2}) as formal structural factors.

\subsection{Causal Judgment Factors}\label{sec_2_4}

Modulatory factors that influence human judgment of causality have been studied and summarized by domain experts \citep{nie2023moca}.

\paragraph{Causal Structure.}
Causal structures can be \textit{conjunctive} or \textit{disjunctive}.
A conjunctive causal structures imply that the outcome occurs only if all contributing events jointly occur, while a disjunctive structure indicates that any single event is sufficient to bring about the outcome.
This factor plays a critical role in causal judgment \citep{wells1989mental,mandel2003judgment,sloman2015causality}.

\paragraph{Normality.}
Norms can be \textit{descriptive} or \textit{prescriptive} \citep{halpern2016actual}. Descriptive norms are statistical typicals, such as the mode, mean, or values close to them. Prescriptive norms are moral standards, laws, institutional policies, or even proper functioning in machines or organisms \citep{halpern2016actual}. 
Humans tend to ascribe more causality to abnormal events than to normal ones \citep{knobe2008causal,hitchcock2009cause}.

\paragraph{Intention.}
The intent of an agent is considered part of their epistemic state, which typically concerns the agent's awareness when performing specific actions.
An agent who is aware of the potential consequences of their action and deliberately performs an action that leads to a foreseeable bad outcome is judged more harshly than one who lacks the relevant knowledge \citep{samland2016role,kominsky2019immoral}.

\paragraph{Action or Omission.}
People tend to identify actions, rather than omissions, as causes. This phenomenon is known as ``omission bias'' \citep{ritov1992status,baron2004omission}. 
In scenarios whether an agent acts while another doesn't, the one who acts is more likely to be judged as a cause \citep{henne2017cause,clarke2015causation,descioli2011omission,gerstenberg2021counterfactual}.

\paragraph{Temporal Effect.}
Causal judgment is influenced by the temporal order of events \citep{gerstenberg2012contributions}. When multiple events unfold over time and lead to an outcome, people tend to identify later events as the cause \citep{reuter2014good}. However, this preference depends on how the events causally relate to one another \citep{hilton2010selecting,spellman1997crediting}. When earlier events determine the course of subsequent actions, they are more likely to be selected as causes \citep{henne2021counterfactual}.

\section{\ourmodel}

\begin{algorithm*}[t]
\caption{\textcolor{blue}{Theory-guided Algorithmic Reasoning}}
\label{algorithm}
\KwIn{
\uline{Causal Events} $\mathcal{C}$ and \uline{Factors} $(f_{\mathrm{focal}}^E,f_{\mathrm{occur}}^E,f_{\mathrm{order}}^E,f_{\mathrm{ac}}^E,f_{\mathrm{nc}}^E,f_{\mathrm{sc}}^E,f_{\mathrm{n}}^E,f_{\mathrm{i}}^E)$ for each $E \in \mathcal{C}$.
}
\KwOut{\uline{Yes/No Answers} and \uline{Explanations} (discussed in Appendix \ref{app:explanation-generation}).}
$\mathcal{A} \leftarrow \{\}$\;
\ForEach{$E \in \mathcal{C}$}{
    \textcolor{gray}{\lIf{$\lnot f_{\mathrm{focal}}^E$}{CONTINUE}}
    \lIf{$f_{\mathrm{nc}}^E \land f_{\mathrm{sc}}^E$}{$\mathcal{A}.\text{append}($``Yes''$)$}
    \lElseIf{$\lnot f_{\mathrm{ac}}^E \land \lnot f_{\mathrm{sc}}^E$}{\appendNo}
    \ElseIf{$\lnot f_{\mathrm{nc}}^E \land f_{\mathrm{sc}}^E$}{
        $\mathcal{C}_s \leftarrow \{E' \in \mathcal{C} \mid \lnot f_{\mathrm{nc}}^{E'} \land f_{\mathrm{sc}}^{E'}\}$\;
        \If{$\mathrm{len}(\mathrm{set}(\{f_{\mathrm{order}}^{E_s}\mid E_s\in\mathcal{C}_s\}))\neq 1$}{
            \lIf{$f_{\mathrm{order}}^E = \max\{f_{\mathrm{order}}^{E_s} \mid E_s \in \mathcal{C}_s\}$}{\appendYes}
            \lElse{\appendNo}
        }
        \Else{
            \lForEach{$E_s \in \mathcal{C}_s$}{
                $f_{\mathrm{resp}}^{E_s} \leftarrow \Phi(f_{\mathrm{n}}^{E_s}, f_{\mathrm{i}}^{E_s})$
            }
            \lIf{$f_{\mathrm{resp}}^E \leftarrow \max\{f_{\mathrm{resp}}^{E_s} \mid E_s \in \mathcal{C}_s\}$}{\appendYes}
            \lElse{\appendNo}
        }
    }
    \ElseIf{$f_{\mathrm{ac}}^E \land \lnot f_{\mathrm{sc}}^E$}{
        \lIf{$f_{\mathrm{n}}^E \lor f_{\mathrm{i}}^E$}{\appendYes}
        \Else{
            $\mathcal{C}_a \leftarrow \{E' \in \mathcal{C} \mid f_{\mathrm{ac}}^{E'} \land \lnot f_{\mathrm{sc}}^{E'}\}$\;
            \lForEach{$E_a\in\mathcal{C}_a$}{$f_{\mathrm{resp}}^{E_a}=\Phi(f_{\mathrm{order}}^{E_a})$}
            \lIf{$f_{\mathrm{resp}}^E=\max\{f_{\mathrm{resp}}^{E_a}\mid E_a\in\mathcal{C}_a\}$}{\appendYes}
            \lElse{\appendNo}
        }
    }
    \textcolor{gray}{\lElse{CONTINUE}}
}
\Return ``Yes'' \textbf{if} $\mathcal{A}$ contains ``Yes'', \textbf{else} ``No''\;
\end{algorithm*}

\ourmodel is a framework that integrates LLMs with theory for human-like causal reasoning.
As shown in Figure \ref{fig:framework}, it operates in three stages, ending with a causal judgment with an explanation.

\subsection{Causal Setting Establishment}\label{causal-setting-establishment}
\ourmodel first sets up the causal setting, including the causal model $M$ and the context $\bm{u}$.

\paragraph{Causal Model.}
First, \ourmodel extracts the causal events $\mathcal{C}$ and outcome events $\mathcal{O}$ from the story $t$ and query $q$, and combine the outcome events to a unified outcome event $O$. The instruction states and generally ensures that causal events are minimal and do not overlap (Definition \ref{definition-1}, AC3).
Second, \ourmodel records whether each causal event is queried in $q$ using a binary factor $f_{\mathrm{focal}}$, where $f_{\mathrm{focal}}^E$ holds if $E\in\mathcal{C}$ is queried.

\paragraph{Context.}
Formally, the context $\bm{u}$ consists of the values of exogenous variables in $\mathcal{U}$, which in turn determine the values of endogenous variables via structural equations $\mathcal{F}$ \citep{halpern2016actual}. In practice, \ourmodel extracts the values of endogenous variables (i.e., event occurrences) as a proxy for the context. 
It also captures the temporal order of events to support the modeling of temporal effects.
For each event $E\in\mathcal{V}$, its occurrence is represented by a binary factor $f_{\mathrm{occur}}$, where $f_{\mathrm{occur}}^E$ holds if $E$ actually occurs; its temporal order is recorded using an integer-valued factor $f_{\mathrm{order}}$, where $f_{\mathrm{order}}^E$ indicates the relative sequential position of $E$ starting from 0.

The step of causal setting establishment is formalized by the following formula:
\begin{equation*}
    \left<\mathcal{C},O,f_{\mathrm{focal}},f_{\mathrm{occur}},f_{\mathrm{order}}\right>\leftarrow\Phi(t,q,I_{\mathrm{m}},I_{\mathrm{c}}),
\end{equation*}
where $\Phi$ is an LLM. $I_{\mathrm{m}}$ and $I_{\mathrm{c}}$ are instructions for $\Phi$ to extract $M$ and $\bm{u}$, respectively.

\subsection{Factor Value Inference}
\ourmodel then infers the values of factors for each causal event $E\in\mathcal{C}$.

\paragraph{Actual Cause.}
The factor $f_{\mathrm{ac}}$ determines actual causes based on Definition~\ref{definition-1}. If had $E$ not occurred, $O$ would not have occurred, while allowing at least one subset of causal events other than $E$ to remain unchanged when $E$ is altered, $f_{\mathrm{ac}}^E$ holds. 

\paragraph{Necessary Cause.}
The factor $f_{\mathrm{nc}}$ determines necessary causes based on Definition \ref{def:but-for}. If had $E$ not occurred, $O$ would not have occurred, $f_{\mathrm{nc}}^E$ holds.

\paragraph{Sufficient Cause.} 
The factor $f_{\mathrm{sc}}$ determines sufficient causes based on Definition \ref{definition-2}. 
If had $E$ occurred, $O$ would have occurred under all contexts, $f_{\mathrm{sc}}^E$ holds. Here, ``under all contexts'' ensures robust sufficiency \citep{hitchcock2012portable,woodward2006sensitive}.

\paragraph{Normality.}
The factor $f_{\mathrm{n}}$ determines norm violations. If $E$ involves norm violations, $f_{\mathrm{n}}^E$ holds.

\paragraph{Intention.}
The factor $f_{\mathrm{i}}$ determines agent intent. If $E$ is an agent's action, and the agent is aware of the potential outcomes of their action and knowingly performs an action that leads to a foreseeable bad outcome, $f_{\mathrm{i}}^E$ holds.

The step of factor value inference is formalized by the following formula:
\begin{equation*}
    \left<f_{\mathrm{ac}},f_{\mathrm{nc}},f_{\mathrm{sc}},f_{\mathrm{n}},f_{\mathrm{i}}\right>=\Phi(t,I_{\mathrm{f}},\mathcal{C},O,f_{\mathrm{occur}}),
\end{equation*}
where $I_{\mathrm{f}}$ is the instruction for $\Phi$ to infer factor values. 
$f_{\mathrm{occur}}$ is used for counterfactual reasoning.

\subsection{Theory-guided Algorithmic Reasoning}

Based on the extracted causal knowledge, Algorithm~\ref{algorithm} conducts theory-guided human-like causal reasoning and deals with binary queries such as ``\textit{Did $X$ cause $Y$?}''
For each queried causal event $E$, the algorithm first partitions the reasoning process using $f_{\mathrm{ac}}^E$, $f_{\mathrm{nc}}^E$ and $f_{\mathrm{sc}}^E$ (Lines 4, 5, 6, and 15). Then, for each partition, it determines whether $E$ is a judged cause of $O$. Finally, it outputs a binary answer to the query and provides an explanation.

Specifically, 
1) If $E$ is a sufficient actual cause, it is a cause of $O$ since it is complete (Line 4).
2) If $E$ is not an actual cause, it is not a cause of $O$ since it is not in the causal chain (Line 5).
3) If $E$ is one among multiple disjunctive contributing causal events, the algorithm first examines their relative temporal orders to assess potential \textit{preemption} \citep{halpern2005causes,halpern2016actual} (Line 9-10). If these events occur simultaneously, their respective responsibilities based on $f_{\mathrm{n}}$ and $f_{\mathrm{i}}$ are compared to determine the cause (Line 13-14).
4) If $E$ is part of the conjunctive set of contributing causal events, the algorithm first evaluates whether $f_{\mathrm{n}}^E$ and $f_{\mathrm{i}}^E$ enhance $E$'s causal strength (Line 16). If so, $E$ is judged as a cause. Otherwise, their respective responsibilities based on $f_{\mathrm{order}}$ are compared to determine the cause (Lines 20-21).
Details are presented in Appendix \ref{app:details-model}.

\section{\ourbench Construction}
Like the example in Figure~\ref{fig:data-example}, \ourbench is defined as $\mathcal{D}:=\{t_i,q_i,a_i,r_i\}_{i=1}^N$, where each sample comprises a causal story $t_i$, a causal query $q_i$, a binary answer $a_i\in\{\mathrm{Yes},\mathrm{No}\}$, and the reasoning steps $r_i$. The goal of \ourbench is to evaluate the correctness of an LLM to map $q$ to $a$, as well as $r_i$. Details are discussed in Appendix \ref{app:details-data}.

\paragraph{Source.}
\ourbench is derived from the causal judgment subset of Big-Bench Hard \citep{suzgun2023challenging}, which contains 141 samples on causation (e.g., ``\textit{Did the red wire cause the machine to short circuit?}'') and 46 samples on intention (e.g., ``\textit{Did the man shoot the dear intentionally?}'').

\paragraph{Data Cleaning.}
Before annotation, we remove 54 samples (46 of which are samples on intention) and correct 2 partially flawed samples. This process reduces noise and preserves only samples of interest, resulting in a curated set of 133 samples. 

\paragraph{Data Annotation.}
We manually annotate the reasoning steps for the 133 samples. Each annotation follows a structured template similar to the one shown in Figure~\ref{fig:data-example}. The annotated information is organized into a unified JSON format. 

\paragraph{Data Augmentation.}
We manually create 30 new samples from existing stories by altering the query to focus on causal events that are not queried, i.e., ``\textit{Did $X'$ cause $Y$?}'' In such cases, only $f_{\mathrm{focal}}$ and $a$ need to be updated.
Then, we employ GPT-4o \citep{hurst2024gpt} to synthesize new samples by rewriting stories from existing ``seed samples''. After generation, the dataset reaches 1093 samples. 
The generated samples are generally more challenging since they have more spurious correlations and fewer explicit causal cues (Appendix \ref{app:case-study-samples}), e.g., ``\textit{When $E_1$ and $E_2$ occur, $O$ will occur.}''

\paragraph{Data Verification.}
We apply both automated validation and human evaluation to verify the quality of generated samples, as discussed in Appendix~\ref{app:data-verification}.

\begin{table}[h]
\centering
\caption{\ourbench statistics.}
\vspace{1.5pt}
\resizebox{\columnwidth}{!}{
\begin{tabular}{@{}lll@{}}
\toprule
\multicolumn{1}{l}{\textbf{Category}} & \multicolumn{1}{l}{\textbf{Statistic}} & \multicolumn{1}{c}{\textbf{Total}} \\ \midrule
\multicolumn{1}{l}{\textsc{Size}}      & \# Samples          & \multicolumn{1}{c}{1,093} \\
\multicolumn{1}{l}{\textsc{Story}}     & \# Sentences per Sample & \multicolumn{1}{c}{8.49} \\
\multicolumn{1}{c}{}          & \# Words per Sample     & \multicolumn{1}{c}{162.70} \\
\multicolumn{1}{l}{\textsc{Query}}     & \# Queried Causal Events per Sample    & \multicolumn{1}{c}{1.02} \\
\multicolumn{1}{l}{\textsc{Reasoning}} & \# Events per Sample    & \multicolumn{1}{c}{3.39} \\
\multicolumn{1}{c}{}          & \# Causal Events per Sample    & \multicolumn{1}{c}{2.39} \\
\multicolumn{1}{c}{}          & \# Outcome Events per Sample  & \multicolumn{1}{c}{1} \\
\multicolumn{1}{l}{\textsc{Answer}}    & Positive Class      & \multicolumn{1}{c}{53.8\%} \\
& Negative Class      & \multicolumn{1}{c}{46.2\%} \\
\bottomrule
\end{tabular}
}
\label{tab:stats}
\end{table}

As shown in Table \ref{tab:stats}, \ourbench contains 1,093 samples with nearly balanced positive and negative answers. On average, each reasoning involves 2.39 causal events and 1 outcome event. In very rare cases, more than one causal events are queried in a single question.
\section{Experiments}
\subsection{Setups and Baselines}\label{setups-and-baselines}
We implement baselines and \ourmodel on recent LLMs for comparison, including Qwen2.5-32/72B-Instruct \citep{qwen2025qwen25technicalreport}, DeepSeek-V3 \citep{liu2024deepseek}, Gemini-2.0-Flash \citep{gemini}, Claude-3.5-Sonnet \citep{claude}, GPT-4o-2024-11-20 \citep{hurst2024gpt}, and GPT-4-0613 \citep{achiam2023gpt}.
\ourmodel is also tested on reasoning models such as DeepSeek-R1 \citep{guo2025deepseek} and QwQ-32B \citep{qwq}.
Temperature is set to $0$ for reproducibility.
We report averaged accuracy over 10 runs for the pilot study, and a single run for main results due to high API costs.
The baselines are: 1) \textbf{Vanilla}, which directly prompts the LLM to output a Yes/No answer given the story and query; 2) \textbf{Zero-shot CoT}, which adds the instruction ``\textit{Let's think step by step.}'' to the vanilla prompt; 3) \textbf{Manual CoT} \citep{chen2024causal}, which replaces ``\textit{Let's think step by step.}'' with three in-context demonstrations containing manually written reasoning steps in natural language.
GPT-4 + manual CoT \citep{chen2024causal} is the previous state-of-the-art on the causal judgment subset of Big-Bench Hard.

\subsection{Pilot Study}
\begin{table}[t]
\centering
\caption{Results of the pilot study. Human average is directly cited from \citet{suzgun2023challenging}. See Appendix \ref{app:pilot-complete} for complete results and statistical significance test.}
\resizebox{\columnwidth}{!}{
\begin{tabular}{@{}lccc@{}}
\toprule
\multicolumn{1}{l}{\textbf{Methods}} & \multicolumn{1}{c}{\textbf{Acc. (C.)}} & \multicolumn{1}{c}{\textbf{Acc. (I.)}} & \multicolumn{1}{c}{\textbf{Acc.}} \\ \midrule

\multicolumn{1}{l}{Human Average} & - & - & \multicolumn{1}{c}{69.60\%} \\ \midrule

\multicolumn{1}{l}{Qwen2.5-32B-Instruct} & 65.25\% & 80.43\% & \multicolumn{1}{c}{68.98\%} \\
\multicolumn{1}{l}{~~+ \ourmodel} & 70.14\% & 80.43\% & \multicolumn{1}{c}{72.67\%} \\ \midrule

\multicolumn{1}{l}{Qwen2.5-72B-Instruct} & 65.89\% & 77.83\% & \multicolumn{1}{c}{68.82\%} \\
\multicolumn{1}{l}{~~+ \ourmodel} & 71.63\% & 79.13\% & \multicolumn{1}{c}{73.48\%} \\ \midrule

\multicolumn{1}{l}{DeepSeek-V3} & 66.45\% & 77.61\% & \multicolumn{1}{c}{69.20\%} \\
\multicolumn{1}{l}{~~+ \ourmodel} & 69.93\% & 78.48\% & \multicolumn{1}{c}{72.03\%} \\ \midrule

\multicolumn{1}{l}{Claude-3.5-Sonnet} & 67.23\% & 73.04\% & \multicolumn{1}{c}{68.66\%} \\
\multicolumn{1}{l}{~~+ \ourmodel} & \underline{72.62\%} & 75.43\% & \multicolumn{1}{c}{73.32\%} \\ \midrule

\multicolumn{1}{l}{GPT-4o-2024-11-20} & 54.89\% & 85.22\% & \multicolumn{1}{c}{62.35\%} \\
\rowcolor{gray!20}
\multicolumn{1}{l}{~~+ \ourmodel} & 70.50\% & 85.22\% & \multicolumn{1}{c}{\uline{74.12\%}} \\ \midrule

\multicolumn{1}{l}{GPT-4-0613} & 62.48\% & 80.22\% & \multicolumn{1}{c}{66.84\%} \\
\rowcolor{gray!20}
\multicolumn{1}{l}{~~+ \ourmodel} & \textbf{74.61\%} & 78.26\% & \multicolumn{1}{c}{\textbf{75.51\%}} \\ \bottomrule
\end{tabular}
}
\label{tab:pilot}
\end{table}
We preliminarily evaluate \ourmodel on Big-Bench Hard causal judgment to assess its effectiveness (Figure \ref{tab:pilot}). We report both overall accuracy and fine-grained accuracies for two query types: causation (C.) and intention (I.).

\paragraph{Overall Findings.}
1) \textit{\ourmodel consistently and significantly improves performance of all LLMs.}
Most notably, GPT-4 + \ourmodel achieves an overall accuracy of 75.51\%, followed by 74.12\% of GPT-4o.
2) \textit{Closed-source LLMs benefit more from \ourmodel.} We attribute this to the stronger capacity of these LLMs to handle complex, multi-step CoT instructions.

\paragraph{Comparison with Humans.}
Comparing with humans is crucial, as it benchmarks the performance of LLMs against the established human consensus on causal judgments. 
In Big-Bench Hard causal judgment, the Yes/No labels are derived from rigorous cognitive science experiments, providing a reliable and externally valid testbed \cite{suzgun2023challenging}. 
Although the average human score appears relatively low, this is largely an artifact of the evaluation procedure: the original proportional human responses computed from individual votes are binarized, such that both a 0\%-100\% split and a 49\%-51\% split are collapsed into a single binary label. As a result, it is nearly impossible for an individual to achieve consistently high accuracy, particularly on instances with near-even splits, even though the best-performing human still reaches 100\% accuracy \cite{suzgun2023challenging}. 
Critically, after applying \ourmodel, all LLMs exceed the human average, demonstrating \textit{\ourmodel can effectively and efficiently enable LLMs to replicate human consensus in causal judgments without relying on domain experts or crowd annotators}.

\subsection{Main Results}
The main results are presented in Table \ref{tab:main}. Our findings underscore the effectiveness of grounding LLM reasoning in explicit theory.

\paragraph{Main Findings.}
1) \textit{\ourmodel drives consistent improvement, with greater gains in stronger models.}
This finding is consistent with our pilot study, highlighting the framework's capacity to effectively utilize the advanced reasoning abilities of state-of-the-art models.
2) \textit{Zero-shot and manual CoT are insufficient for the task.} Neither zero-shot CoT nor manual CoT yields consistent performance improvements, which highlights the inherent difficulty of the task.
3) \textit{Theory-grounded reasoning delivers substantial improvements.}
The consistent boost in accuracy across diverse LLMs confirms the core benefit of our approach: grounding reasoning in explicit theory.

\paragraph{Human Evaluation.}
Due to the high cognitive load of the task, we randomly selected 100 samples from \ourbench and conduct a human evaluation. Four participants are involved, achieving accuracies of 60\%, 66\%, 68\%, and 69\%, with an average accuracy of 65.75\%. This average accuracy is lower than that of Big-Bench Hard causal judgment, which is consistent with our findings in Appendix \ref{app:case-study-samples} that \ourbench is more challenging than Big-Bench Hard causal judgment since it introduces more spurious correlations and fewer explicit causal cues.

\begin{table}[t]
\centering
\caption{The results of different LLMs on \ourbench. \textbf{CE}$\checkmark$ and \textbf{OE}$\checkmark$ are the proportions of correctly identified causal and outcome events, respectively.}
\label{tab:main}
\resizebox{\columnwidth}{!}{
\begin{tabular}{@{}lcccr@{}}
\toprule
\multicolumn{1}{l}{\textbf{Methods}} & \multicolumn{1}{c}{\textbf{Accuracy}} & \multicolumn{1}{c}{\textbf{CE$\checkmark$}} & \multicolumn{1}{c}{\textbf{OE$\checkmark$}} \\
\midrule

\multicolumn{4}{l}{\textit{Open-source LLMs}} \\
\midrule

\multicolumn{1}{l}{Qwen-2.5-32B-Instruct} & 62.67\% & \multicolumn{1}{c}{-} & \multicolumn{1}{c}{-} \\
\multicolumn{1}{l}{~~+ zero-shot CoT} & 61.94\% & \multicolumn{1}{c}{-} & \multicolumn{1}{c}{-} \\
\multicolumn{1}{l}{~~+ manual CoT} & 62.85\% & \multicolumn{1}{c}{-} & \multicolumn{1}{c}{-} \\
\multicolumn{1}{l}{~~+ \ourmodel} & 64.78\% & \underline{95.91\%} & 93.96\% \\
\midrule

\multicolumn{1}{l}{Qwen-2.5-72B-Instruct} & 64.87\% & \multicolumn{1}{c}{-} & \multicolumn{1}{c}{-} \\
\multicolumn{1}{l}{~~+ zero-shot CoT} & 62.03\% & \multicolumn{1}{c}{-} & \multicolumn{1}{c}{-} \\
\multicolumn{1}{l}{~~+ manual CoT} & 62.31\% & \multicolumn{1}{c}{-} & \multicolumn{1}{c}{-} \\
\multicolumn{1}{l}{~~+ \ourmodel} & 67.52\% & 94.53\% & 91.95\% \\
\midrule

\multicolumn{1}{l}{DeepSeek-V3} & 63.49\% & \multicolumn{1}{c}{-} & \multicolumn{1}{c}{-} \\
\multicolumn{1}{l}{~~+ zero-shot CoT} & 64.68\% & \multicolumn{1}{c}{-} & \multicolumn{1}{c}{-} \\
\multicolumn{1}{l}{~~+ manual CoT} & 63.95\% & \multicolumn{1}{c}{-} & \multicolumn{1}{c}{-} \\
\multicolumn{1}{l}{~~+ \ourmodel} & 67.61\% & \underline{96.06\%} & 91.40\% \\
\midrule

\multicolumn{4}{l}{\textit{Closed-source LLMs}} \\
\midrule

\multicolumn{1}{l}{Gemini-2.0-Flash} & 60.20\% & \multicolumn{1}{c}{-} & \multicolumn{1}{c}{-} \\
\multicolumn{1}{l}{~~+ zero-shot CoT} & 58.65\% & \multicolumn{1}{c}{-} & \multicolumn{1}{c}{-} \\
\multicolumn{1}{l}{~~+ manual CoT} & 58.55\% & \multicolumn{1}{c}{-} & \multicolumn{1}{c}{-} \\
\multicolumn{1}{l}{~~+ \ourmodel} & 64.96\% & \textbf{96.63\%} & \underline{95.61\%} \\
\midrule

\multicolumn{1}{l}{Claude-3.5-Sonnet} & 63.68\% & \multicolumn{1}{c}{-} & \multicolumn{1}{c}{-} \\
\multicolumn{1}{l}{~~+ zero-shot CoT} & 65.42\% & \multicolumn{1}{c}{-} & \multicolumn{1}{c}{-} \\
\multicolumn{1}{l}{~~+ manual CoT} & 62.67\% & \multicolumn{1}{c}{-} & \multicolumn{1}{c}{-} \\
\rowcolor{gray!20}
\multicolumn{1}{l}{~~+ \ourmodel} & \underline{70.54\%} & 95.52\% & 91.67\% \\
\midrule

\multicolumn{1}{l}{GPT-4o-2024-11-20} & 58.65\% & \multicolumn{1}{c}{-} & \multicolumn{1}{c}{-} \\
\multicolumn{1}{l}{~~+ zero-shot CoT} & 59.38\% & \multicolumn{1}{c}{-} & \multicolumn{1}{c}{-} \\
\multicolumn{1}{l}{~~+ manual CoT} & 60.66\% & \multicolumn{1}{c}{-} & \multicolumn{1}{c}{-} \\
\multicolumn{1}{l}{~~+ \ourmodel} & 68.07\% & 94.45\% & 92.04\% \\
\midrule

\multicolumn{1}{l}{GPT-4-0613} & 63.77\% & \multicolumn{1}{c}{-} & \multicolumn{1}{c}{-} \\
\multicolumn{1}{l}{~~+ zero-shot CoT} & 62.49\% & \multicolumn{1}{c}{-} & \multicolumn{1}{c}{-} \\
\multicolumn{1}{l}{~~+ manual CoT} & 66.51\% & \multicolumn{1}{c}{-} & \multicolumn{1}{c}{-} \\
\rowcolor{gray!20}
\multicolumn{1}{l}{~~+ \ourmodel} & \textbf{71.82\%} & 94.26\% & \textbf{96.07\%} \\
\bottomrule
\end{tabular}
}
\end{table}

\begin{figure}
    \centering
    \includegraphics[width=0.75\columnwidth]{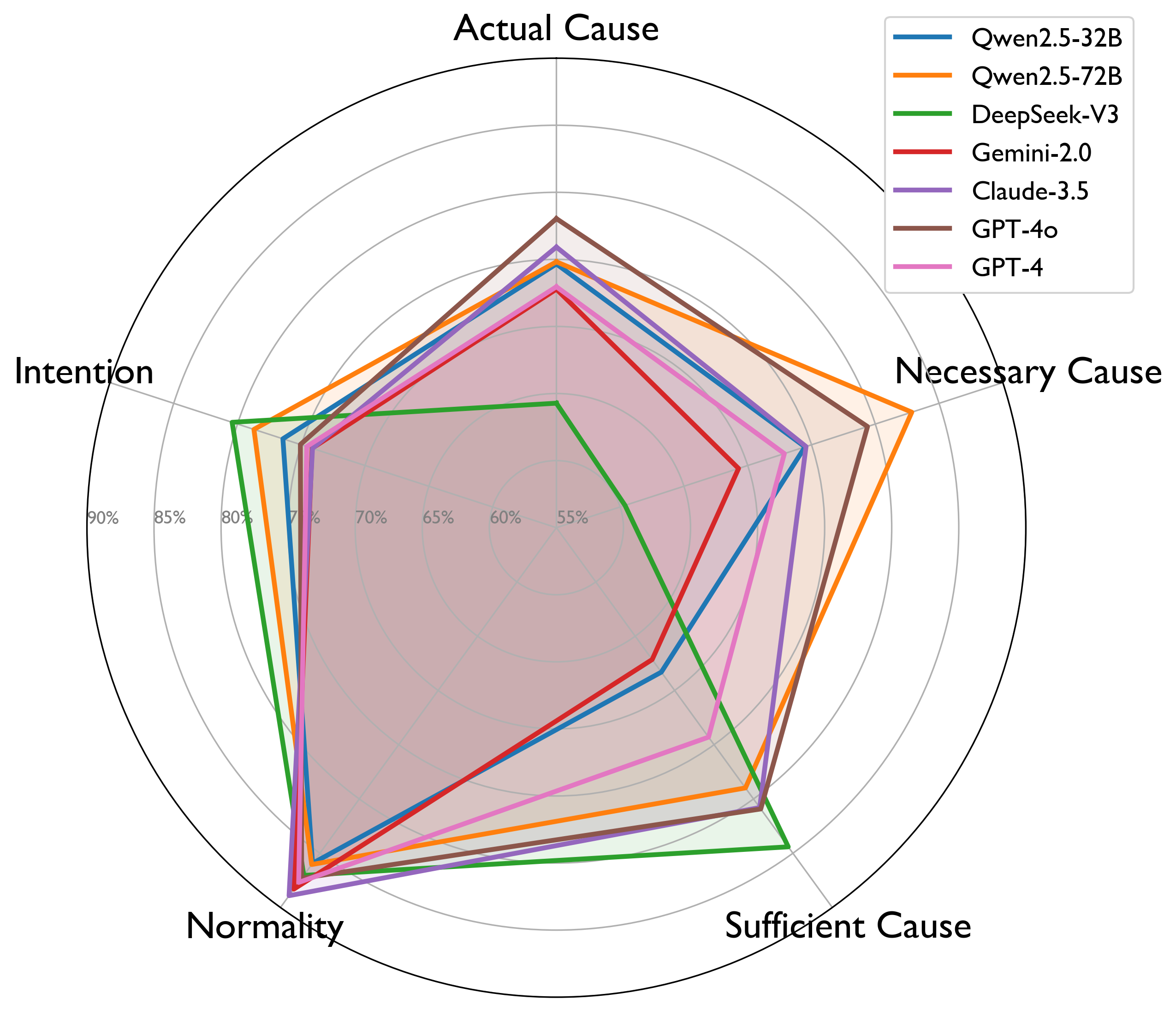}
    \caption{Fine-grained accuracies of different factors.}
    \label{fig:radar}
\end{figure}

\subsection{Fine-grained Analysis}

\paragraph{Fine-grained Accuracies.}
To better understand the internal mechanism of \ourmodel, we report fine-grained accuracies in Table \ref{tab:main} and Figure \ref{fig:radar}.
We find that \textit{overall performance is driven by factor value inference}.
The evidence is that all LLMs demonstrate high accuracy in the initial stage of identifying causal and outcome events.
However, we observe no clear correlation between these fine-grained accuracies and the overall accuracy.
Therefore, to examine the extent to which each causal factor influences the overall accuracy, we perform a causal analysis.

\paragraph{Causal Analysis.}
In Figure \ref{fig:causal-analysis-figure}, we plot the average average treatment effect (ATE) of each factor on the overall accuracy across two estimation methods: ordinary least squares (OLS) \cite{angrist2009mostly} and propensity score matching (Matching) \cite{rosenbaum1983central}, marking factors with $p\geq 0.05$ with cross marks.
We find that \textit{only Qwen2.5-72B-Instruct and Claude-3.5-Sonnet exhibit faithful reasoning, while GPT-4 appears to utilize shortcuts.} 
Ideally, all causal factors except $f_{nc}$, which is primarily used for partitioning and does not involve in determining causes, should causally contribute to the overall accuracy. However, only Qwen2.5-72B-Instruct and Claude-3.5-Sonnet align well with this expectation, exhibiting faithful reasoning. GPT-4 appears to rely heavily on $f_{nc}$ while underutilizing $f_{ac}$, suggesting that it may be exploiting shortcuts rather than genuinely understanding the task. More broadly, the limited number of LLMs recognizing the importance of $f_{ac}$ underscores a common deficiency in current LLMs' grasp of human-like causal reasoning.

\begin{figure}[h]
  \centering
  \includegraphics[width=\columnwidth]{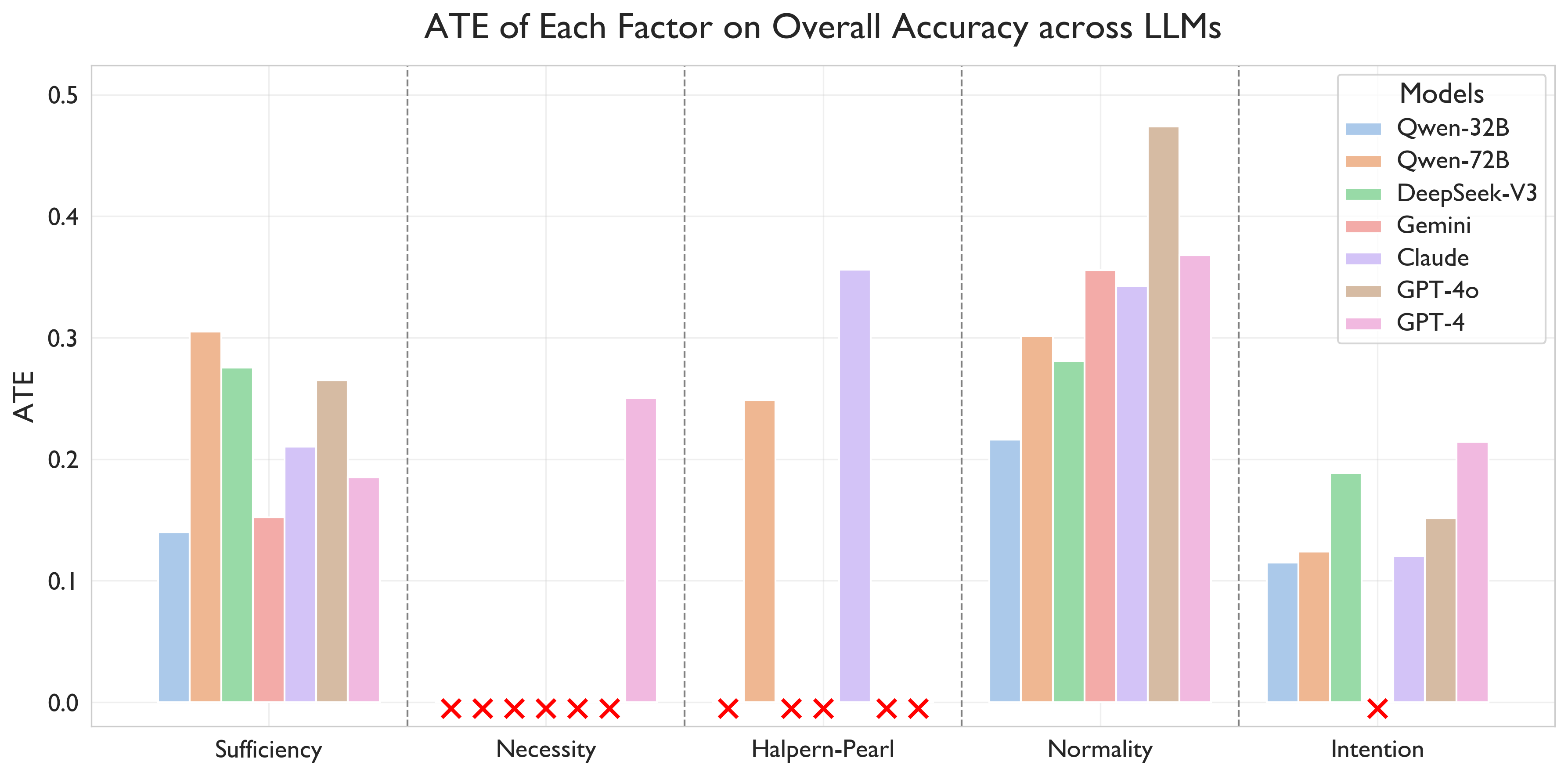}
  \caption{Results of the causal analysis.}
  \label{fig:causal-analysis-figure}
\end{figure}

\subsection{Ablation Study}

We conduct an ablation study to quantify the contribution of each stage and report performance across the following settings: 1) \textbf{Vanilla}; 2) \textbf{FO} (the first one stage only); 3) \textbf{FT} (the first two stages only); and 4) \ourmodel.

\paragraph{Non-reasoning LLMs.}
1) \textit{The first stage alone degrades performance.}
This effect is observed across non-reasoning LLMs, with the exception of DeepSeek-V3.
While these models can effectively identify causally relevant events, as shown in Table \ref{tab:main}, the instruction of the first stage may lead them to extract superfluous or irrelevant causal events, thereby hindering accurate decision-making.
2) \textit{Combining the first two stages typically improves performance.}
Encouraging non-reasoning LLMs to infer and consider distinct structural and psychological factors for each event promotes a clearer representation of the underlying causal mechanisms, leading to more informed and consistent reasoning.
3) \textit{Algorithmic reasoning yields the most substantial gains.} 
This observation directly supports our central hypothesis: the proposed Algorithm \ref{algorithm} enables precise human-like causal reasoning by leveraging the structured causal knowledge inferred in the earlier stages, thereby aligning the model outputs more closely with human judgments.

\paragraph{The Effect of ``Slow Thinking.''}
Because non-reasoning LLMs lack intrinsic reflection and verification mechanisms, we further evaluate ``slow thinking'' models including DeepSeek-R1 and QwQ-32B.
1) \textit{``Slow thinking'' models perform poorly independently.} This outcome underscores the inherent difficulty of human-like causal reasoning in LLMs, indicating that mere self-reflection without guidance is insufficient.
2) \textit{``Slow thinking'' models achieve the largest performance gains with \ourmodel.} This demonstrates that reflection and verification capabilities become truly effective only when properly guided by theory.

\begin{table}[t]
\centering
\scriptsize
\caption{Ablation results on \ourbench. \textcolor{green}{$\uparrow$} and \textcolor{red}{$\downarrow$} denotes performance rise/drop compared with Vanilla.}
\label{tab:ablation}
\resizebox{\columnwidth}{!}{
\begin{tabular}{lc}
\toprule
\multicolumn{1}{l}{\textbf{Model}} & \multicolumn{1}{c}{\textbf{Vanilla $\rightarrow$ FO $\rightarrow$ FT $\rightarrow$ \ourmodel}} \\
\midrule
\multicolumn{2}{l}{\textit{Open-source LLMs}} \\
\midrule
\multicolumn{1}{l}{Qwen2.5-32B} & \multicolumn{1}{c}{62.67\% $\rightarrow$ 61.67\%\textcolor{red}{$\downarrow$} $\rightarrow$ 64.59\%\textcolor{green}{$\uparrow$} $\rightarrow$ 64.78\%\textcolor{green}{$\uparrow$}} \\
\multicolumn{1}{l}{Qwen2.5-72B} & \multicolumn{1}{c}{64.87\% $\rightarrow$ 64.41\%\textcolor{red}{$\downarrow$} $\rightarrow$ 67.98\%\textcolor{green}{$\uparrow$} $\rightarrow$ 67.52\%\textcolor{green}{$\uparrow$}} \\
\multicolumn{1}{l}{DeepSeek-V3} & \multicolumn{1}{c}{63.49\% $\rightarrow$ 67.25\%\textcolor{green}{$\uparrow$} $\rightarrow$ 65.87\%\textcolor{green}{$\uparrow$} $\rightarrow$ 67.61\%\textcolor{green}{$\uparrow$}} \\
\midrule
\multicolumn{2}{l}{\textit{Closed-source LLMs}} \\
\midrule
\multicolumn{1}{l}{Gemini-2.0} & \multicolumn{1}{c}{60.20\% $\rightarrow$ 58.19\%\textcolor{red}{$\downarrow$} $\rightarrow$ 60.84\%\textcolor{green}{$\uparrow$} $\rightarrow$ 64.96\%\textcolor{green}{$\uparrow$}} \\
\multicolumn{1}{l}{Claude-3.5} & \multicolumn{1}{c}{63.68\% $\rightarrow$ 58.83\%\textcolor{red}{$\downarrow$} $\rightarrow$ 60.93\%\textcolor{red}{$\downarrow$} $\rightarrow$ 70.54\%\textcolor{green}{$\uparrow$}} \\
\multicolumn{1}{l}{GPT-4o} & \multicolumn{1}{c}{58.65\% $\rightarrow$ 58.28\%\textcolor{red}{$\downarrow$} $\rightarrow$ 58.46\%\textcolor{red}{$\downarrow$} $\rightarrow$ 68.07\%\textcolor{green}{$\uparrow$}} \\
\multicolumn{1}{l}{GPT-4} & \multicolumn{1}{c}{63.77\% $\rightarrow$ 62.31\%\textcolor{red}{$\downarrow$} $\rightarrow$ 64.32\%\textcolor{green}{$\uparrow$} $\rightarrow$ 71.82\%\textcolor{green}{$\uparrow$}} \\
\midrule
\multicolumn{2}{l}{\textit{Reasoning LLMs}} \\
\midrule
\multicolumn{1}{l}{QwQ-32B} & \multicolumn{1}{c}{54.99\% $\rightarrow$ 56.17\%\textcolor{green}{$\uparrow$} $\rightarrow$ 56.08\%\textcolor{green}{$\uparrow$} $\rightarrow$ 67.15\%\textcolor{green}{$\uparrow$}} \\
\multicolumn{1}{l}{DeepSeek-R1} & \multicolumn{1}{c}{57.09\% $\rightarrow$ 58.28\%\textcolor{green}{$\uparrow$} $\rightarrow$ 58.92\%\textcolor{green}{$\uparrow$} $\rightarrow$ 69.72\%\textcolor{green}{$\uparrow$}} \\
\bottomrule
\end{tabular}
}
\end{table}
\section{Related Work}
Simulating human-like causal reasoning is critical for developing robust, interpretable, and ethically aligned AI systems, enabling them to move beyond surface-level correlation toward identifying root causes \cite{pearl2009causality}. Human causal cognition relies on underlying causal knowledge \cite{johnson2010mental} and structured models that support counterfactual inference \cite{lombrozo2006structure,byrne2016counterfactual}, and employs counterfactual thinking to assess responsibility and determine an event's role \cite{gerstenberg2024counterfactual}. This process culminates in causal selection, where individuals single out a specific element from a complex causal structure as the actual cause \cite{reuter2014good}, which is a phenomenon studied both in philosophy and cognitive science under ``causal selection'' and ``folk causal attribution'' \cite{hitchcock2007three,alicke1992culpable,alicke2000culpable}. Existing research has focused primarily on fragmented aspects of this duality, failing to systematically unify the structural necessity of causation with psychological selection biases. Therefore, we introduce \ourmodel, which leverages the formal rigor of actual causality to provide a structural definition and a necessary filter for candidate causes \cite{halpern2016actual,halpern2005causes}. Only after an event is confirmed as structurally necessary does the system simulate causal judgment by modulating the final selection based on established cognitive factors such as morality, normality, intention, and blame \cite{reuter2014good,lagnado2008judgments,waldmann2017oxford}. By integrating structural chain identification with psychologically informed selection, \ourmodel introduces a robust, dual-stage mechanism for simulating human-like causal reasoning that enforces both explanatory fidelity and psychological alignment.
\section{Conclusion}
To simulate human-like causal reasoning in LLMs, we propose \ourmodel, which integrates LLMs with theory for human-like causal reasoning.
Also, we construct the \ourbench benchmark, which focuses on fine-grained human-like causal reasoning evaluation. It contains 1,093 challenging instances with detailed annotations of reasoning steps.
Experiments reveal \ourmodel consistently improves LLM performance over baselines. On Big-Bench Hard causal judgment, all tested LLMs surpass the average human accuracy of 69.60\%, with GPT-4 + \ourmodel achieving 75.51\%. On \ourbench, GPT-4 + \ourmodel again achieves the highest accuracy of 71.82\%.
Also, integrating theory into LLMs is empirically proved to be highly effective.

\clearpage

\section*{Limitations}\label{app:limitations}
\textit{First, the \ourmodel framework remains incomplete.} 
It incorporates several commonly studied factors in actual causality and causal judgment but may overlook more fine-grained or domain-specific factors that influence human causal reasoning.
Furthermore, the current implementation processes only a single causal event at a time.
\textit{Second, relying solely on LLMs to identify actual causes is insufficient.}
A more formal symbolic reasoning component is needed to ensure interpretability, consistency, and adherence to established causal principles.

\bibliography{custom}

\begin{thebibliography}{62}
\providecommand{\natexlab}[1]{#1}

\bibitem[{Achiam et~al.(2023)Achiam, Adler, Agarwal, Ahmad, Akkaya, Aleman, Almeida, Altenschmidt, Altman, Anadkat et~al.}]{achiam2023gpt}
Josh Achiam, Steven Adler, Sandhini Agarwal, Lama Ahmad, Ilge Akkaya, Florencia~Leoni Aleman, Diogo Almeida, Janko Altenschmidt, Sam Altman, Shyamal Anadkat, and 1 others. 2023.
\newblock Gpt-4 technical report.
\newblock \emph{arXiv preprint arXiv:2303.08774}.

\bibitem[{Alicke(1992)}]{alicke1992culpable}
Mark~D Alicke. 1992.
\newblock Culpable causation.
\newblock \emph{Journal of personality and social psychology}, 63(3):368.

\bibitem[{Alicke(2000)}]{alicke2000culpable}
Mark~D Alicke. 2000.
\newblock Culpable control and the psychology of blame.
\newblock \emph{Psychological bulletin}, 126(4):556.

\bibitem[{Angrist and Pischke(2009)}]{angrist2009mostly}
Joshua~D Angrist and J{\"o}rn-Steffen Pischke. 2009.
\newblock \emph{Mostly harmless econometrics: An empiricist's companion}.
\newblock Princeton university press.

\bibitem[{{Anthropic}(2024)}]{claude}
{Anthropic}. 2024.
\newblock Introducing claude 3.5 sonnet.
\newblock \url{https://www.anthropic.com/news/claude-3-5-sonnet}.

\bibitem[{Ashwani et~al.(2024)Ashwani, Hegde, Mannuru, Sengar, Jindal, Kathala, Banga, Jain, and Chadha}]{ashwani2024cause}
Swagata Ashwani, Kshiteesh Hegde, Nishith~Reddy Mannuru, Dushyant~Singh Sengar, Mayank Jindal, Krishna Chaitanya~Rao Kathala, Dishant Banga, Vinija Jain, and Aman Chadha. 2024.
\newblock Cause and effect: can large language models truly understand causality?
\newblock In \emph{Proceedings of the AAAI Symposium Series}, volume~4, pages 2--9.

\bibitem[{Baron and Ritov(2004)}]{baron2004omission}
Jonathan Baron and Ilana Ritov. 2004.
\newblock Omission bias, individual differences, and normality.
\newblock \emph{Organizational behavior and human decision processes}, 94(2):74--85.

\bibitem[{Byrne(2016)}]{byrne2016counterfactual}
Ruth~MJ Byrne. 2016.
\newblock Counterfactual thought.
\newblock \emph{Annual review of psychology}, 67(1):135--157.

\bibitem[{Chen et~al.(2024)Chen, Peng, Chen, Wang, Xu, Zeng, Zhao, Zhao, Qiao, and Lu}]{chen2024causal}
Sirui Chen, Bo~Peng, Meiqi Chen, Ruiqi Wang, Mengying Xu, Xingyu Zeng, Rui Zhao, Shengjie Zhao, Yu~Qiao, and Chaochao Lu. 2024.
\newblock Causal evaluation of language models.
\newblock \emph{arXiv preprint arXiv:2405.00622}.

\bibitem[{Chi et~al.(2024)Chi, Li, Yang, Liu, Lan, Ren, Liu, and Han}]{chi2024unveiling}
Haoang Chi, He~Li, Wenjing Yang, Feng Liu, Long Lan, Xiaoguang Ren, Tongliang Liu, and Bo~Han. 2024.
\newblock Unveiling causal reasoning in large language models: Reality or mirage?
\newblock \emph{Advances in Neural Information Processing Systems}, 37:96640--96670.

\bibitem[{Clarke et~al.(2015)Clarke, Shepherd, Stigall, Waller, and Zarpentine}]{clarke2015causation}
Randolph Clarke, Joshua Shepherd, John Stigall, Robyn~Repko Waller, and Chris Zarpentine. 2015.
\newblock Causation, norms, and omissions: A study of causal judgments.
\newblock \emph{Philosophical Psychology}, 28(2):279--293.

\bibitem[{DeScioli et~al.(2011)DeScioli, Bruening, and Kurzban}]{descioli2011omission}
Peter DeScioli, Rebecca Bruening, and Robert Kurzban. 2011.
\newblock The omission effect in moral cognition: Toward a functional explanation.
\newblock \emph{Evolution and Human Behavior}, 32(3):204--215.

\bibitem[{Dettki et~al.(2025)Dettki, Lake, Wu, and Rehder}]{dettki2025large}
Hanna~M Dettki, Brenden~M Lake, Charley~M Wu, and Bob Rehder. 2025.
\newblock Do large language models reason causally like us? even better?
\newblock \emph{arXiv preprint arXiv:2502.10215}.

\bibitem[{Du et~al.(2022)Du, Ding, Xiong, Liu, and Qin}]{du2022care}
Li~Du, Xiao Ding, Kai Xiong, Ting Liu, and Bing Qin. 2022.
\newblock e-care: a new dataset for exploring explainable causal reasoning.
\newblock In \emph{Proceedings of the 60th Annual Meeting of the Association for Computational Linguistics (Volume 1: Long Papers)}, pages 432--446.

\bibitem[{Dunn(1961)}]{dunn1961multiple}
Olive~Jean Dunn. 1961.
\newblock Multiple comparisons among means.
\newblock \emph{Journal of the American statistical association}, 56(293):52--64.

\bibitem[{Edgington(1969)}]{edgington1969approximate}
Eugene~S Edgington. 1969.
\newblock Approximate randomization tests.
\newblock \emph{The Journal of Psychology}, 72(2):143--149.

\bibitem[{Gerstenberg(2024)}]{gerstenberg2024counterfactual}
Tobias Gerstenberg. 2024.
\newblock Counterfactual simulation in causal cognition.
\newblock \emph{Trends in Cognitive Sciences}, 28(10):924--936.

\bibitem[{Gerstenberg and Lagnado(2012)}]{gerstenberg2012contributions}
Tobias Gerstenberg and David~A Lagnado. 2012.
\newblock When contributions make a difference: Explaining order effects in responsibility attribution.
\newblock \emph{Psychonomic bulletin \& review}, 19(4):729--736.

\bibitem[{Gerstenberg and Stephan(2021)}]{gerstenberg2021counterfactual}
Tobias Gerstenberg and Simon Stephan. 2021.
\newblock A counterfactual simulation model of causation by omission.
\newblock \emph{Cognition}, 216:104842.

\bibitem[{{Google}(2024)}]{gemini}
{Google}. 2024.
\newblock Introducing gemini 2.0: our new ai model for the agentic era.
\newblock \url{https://blog.google/technology/google-deepmind/google-gemini-ai-update-december-2024/}.

\bibitem[{Guo et~al.(2025)Guo, Yang, Zhang, Song, Zhang, Xu, Zhu, Ma, Wang, Bi et~al.}]{guo2025deepseek}
Daya Guo, Dejian Yang, Haowei Zhang, Junxiao Song, Ruoyu Zhang, Runxin Xu, Qihao Zhu, Shirong Ma, Peiyi Wang, Xiao Bi, and 1 others. 2025.
\newblock Deepseek-r1: Incentivizing reasoning capability in llms via reinforcement learning.
\newblock \emph{arXiv preprint arXiv:2501.12948}.

\bibitem[{Halpern(2016)}]{halpern2016actual}
Joseph~Y Halpern. 2016.
\newblock \emph{Actual causality}.
\newblock MiT Press.

\bibitem[{Halpern and Hitchcock(2015)}]{halpern2015graded}
Joseph~Y Halpern and Christopher Hitchcock. 2015.
\newblock Graded causation and defaults.
\newblock \emph{The British Journal for the Philosophy of Science}.

\bibitem[{Halpern and Pearl(2005)}]{halpern2005causes}
Joseph~Y Halpern and Judea Pearl. 2005.
\newblock Causes and explanations: A structural-model approach. part i: Causes.
\newblock \emph{The British journal for the philosophy of science}.

\bibitem[{Hart and Honor{\'e}(1985)}]{hart1985causation}
Herbert Lionel~Adolphus Hart and Tony Honor{\'e}. 1985.
\newblock \emph{Causation in the Law}.
\newblock OUP Oxford.

\bibitem[{Hausman(2005)}]{hausman2005causal}
Daniel~Murray Hausman. 2005.
\newblock Causal relata: Tokens, types, or variables?
\newblock \emph{Erkenntnis}, 63(1):33--54.

\bibitem[{Henne et~al.(2021)Henne, Kulesza, Perez, and Houcek}]{henne2021counterfactual}
Paul Henne, Aleksandra Kulesza, Karla Perez, and Augustana Houcek. 2021.
\newblock Counterfactual thinking and recency effects in causal judgment.
\newblock \emph{Cognition}, 212:104708.

\bibitem[{Henne et~al.(2017)Henne, Pinillos, and De~Brigard}]{henne2017cause}
Paul Henne, {\'A}ngel Pinillos, and Felipe De~Brigard. 2017.
\newblock Cause by omission and norm: Not watering plants.
\newblock \emph{Australasian Journal of Philosophy}, 95(2):270--283.

\bibitem[{Hilton et~al.(2010)Hilton, McClure, and Sutton}]{hilton2010selecting}
Denis~J Hilton, John McClure, and Robbie~M Sutton. 2010.
\newblock Selecting explanations from causal chains: Do statistical principles explain preferences for voluntary causes?
\newblock \emph{European Journal of Social Psychology}, 40(3):383--400.

\bibitem[{Hitchcock(2007)}]{hitchcock2007three}
Christopher Hitchcock. 2007.
\newblock Three concepts of causation.
\newblock \emph{Philosophy Compass}, 2(3):508--516.

\bibitem[{Hitchcock(2012)}]{hitchcock2012portable}
Christopher Hitchcock. 2012.
\newblock Portable causal dependence: A tale of consilience.
\newblock \emph{Philosophy of Science}, 79(5):942--951.

\bibitem[{Hitchcock and Knobe(2009)}]{hitchcock2009cause}
Christopher Hitchcock and Joshua Knobe. 2009.
\newblock Cause and norm.
\newblock \emph{The Journal of Philosophy}, 106(11):587--612.

\bibitem[{Hurst et~al.(2024)Hurst, Lerer, Goucher, Perelman, Ramesh, Clark, Ostrow, Welihinda, Hayes, Radford et~al.}]{hurst2024gpt}
Aaron Hurst, Adam Lerer, Adam~P Goucher, Adam Perelman, Aditya Ramesh, Aidan Clark, AJ~Ostrow, Akila Welihinda, Alan Hayes, Alec Radford, and 1 others. 2024.
\newblock Gpt-4o system card.
\newblock \emph{arXiv preprint arXiv:2410.21276}.

\bibitem[{Icard et~al.(2017)Icard, Kominsky, and Knobe}]{icard2017normality}
Thomas~F Icard, Jonathan~F Kominsky, and Joshua Knobe. 2017.
\newblock Normality and actual causal strength.
\newblock \emph{Cognition}, 161:80--93.

\bibitem[{Johnson-Laird(2010)}]{johnson2010mental}
Philip~N Johnson-Laird. 2010.
\newblock Mental models and human reasoning.
\newblock \emph{Proceedings of the National Academy of Sciences}, 107(43):18243--18250.

\bibitem[{Kiciman et~al.(2023)Kiciman, Ness, Sharma, and Tan}]{kiciman2023causal}
Emre Kiciman, Robert Ness, Amit Sharma, and Chenhao Tan. 2023.
\newblock Causal reasoning and large language models: Opening a new frontier for causality.
\newblock \emph{Transactions on Machine Learning Research}.

\bibitem[{Knobe and Fraser(2008)}]{knobe2008causal}
Joshua Knobe and Benjamin Fraser. 2008.
\newblock Causal judgment and moral judgment: Two experiments.
\newblock In \emph{Moral Psychology Volume 2: The Cognitive Science of Morality: Intuition and Diversity}, pages 441--447. The MIT Press.

\bibitem[{Kominsky and Phillips(2019)}]{kominsky2019immoral}
Jonathan~F Kominsky and Jonathan Phillips. 2019.
\newblock Immoral professors and malfunctioning tools: Counterfactual relevance accounts explain the effect of norm violations on causal selection.
\newblock \emph{Cognitive science}, 43(11):e12792.

\bibitem[{Lagnado and Channon(2008)}]{lagnado2008judgments}
David~A Lagnado and Shelley Channon. 2008.
\newblock Judgments of cause and blame: The effects of intentionality and foreseeability.
\newblock \emph{Cognition}, 108(3):754--770.

\bibitem[{Lewis(2000)}]{lewis2000causation}
David Lewis. 2000.
\newblock Causation as influence.
\newblock \emph{The Journal of Philosophy}, 97(4):182--197.

\bibitem[{Liu et~al.(2024)Liu, Feng, Xue, Wang, Wu, Lu, Zhao, Deng, Zhang, Ruan et~al.}]{liu2024deepseek}
Aixin Liu, Bei Feng, Bing Xue, Bingxuan Wang, Bochao Wu, Chengda Lu, Chenggang Zhao, Chengqi Deng, Chenyu Zhang, Chong Ruan, and 1 others. 2024.
\newblock Deepseek-v3 technical report.
\newblock \emph{arXiv preprint arXiv:2412.19437}.

\bibitem[{Lombrozo(2006)}]{lombrozo2006structure}
Tania Lombrozo. 2006.
\newblock The structure and function of explanations.
\newblock \emph{Trends in cognitive sciences}, 10(10):464--470.

\bibitem[{Mandel(2003)}]{mandel2003judgment}
David~R Mandel. 2003.
\newblock Judgment dissociation theory: An analysis of differences in causal, counterfactual and covariational reasoning.
\newblock \emph{Journal of Experimental Psychology: General}, 132(3):419.

\bibitem[{Nie et~al.(2023)Nie, Zhang, Amdekar, Piech, Hashimoto, and Gerstenberg}]{nie2023moca}
Allen Nie, Yuhui Zhang, Atharva~Shailesh Amdekar, Chris Piech, Tatsunori~B Hashimoto, and Tobias Gerstenberg. 2023.
\newblock Moca: Measuring human-language model alignment on causal and moral judgment tasks.
\newblock \emph{Advances in Neural Information Processing Systems}, 36:78360--78393.

\bibitem[{Pearl(2009)}]{pearl2009causality}
Judea Pearl. 2009.
\newblock \emph{Causality}.
\newblock Cambridge university press.

\bibitem[{Qwen et~al.(2025)Qwen, :, Yang, Yang, Zhang, Hui, Zheng, Yu, Li, Liu, Huang, Wei, Lin, Yang, Tu, Zhang, Yang, Yang, Zhou, Lin, Dang, Lu, Bao, Yang, Yu, Li, Xue, Zhang, Zhu, Men, Lin, Li, Tang, Xia, Ren, Ren, Fan, Su, Zhang, Wan, Liu, Cui, Zhang, and Qiu}]{qwen2025qwen25technicalreport}
Qwen, :, An~Yang, Baosong Yang, Beichen Zhang, Binyuan Hui, Bo~Zheng, Bowen Yu, Chengyuan Li, Dayiheng Liu, Fei Huang, Haoran Wei, Huan Lin, Jian Yang, Jianhong Tu, Jianwei Zhang, Jianxin Yang, Jiaxi Yang, Jingren Zhou, and 25 others. 2025.
\newblock \href {https://arxiv.org/abs/2412.15115} {Qwen2.5 technical report}.
\newblock \emph{Preprint}, arXiv:2412.15115.

\bibitem[{{Qwen Team}(2025)}]{qwq}
{Qwen Team}. 2025.
\newblock Qwq-32b: Embracing the power of reinforcement learning.
\newblock \url{https://qwenlm.github.io/blog/qwq-32b/}.

\bibitem[{Reuter et~al.(2014)Reuter, Kirfel, Van~Riel, and Barlassina}]{reuter2014good}
Kevin Reuter, Lara Kirfel, Raphael Van~Riel, and Luca Barlassina. 2014.
\newblock The good, the bad, and the timely: How temporal order and moral judgment influence causal selection.
\newblock \emph{Frontiers in psychology}, 5:1336.

\bibitem[{Ritov and Baron(1992)}]{ritov1992status}
Ilana Ritov and Jonathan Baron. 1992.
\newblock Status-quo and omission biases.
\newblock \emph{Journal of risk and uncertainty}, 5(1):49--61.

\bibitem[{Roemmele et~al.(2011)Roemmele, Bejan, and Gordon}]{roemmele2011choice}
Melissa Roemmele, Cosmin~Adrian Bejan, and Andrew~S Gordon. 2011.
\newblock Choice of plausible alternatives: An evaluation of commonsense causal reasoning.
\newblock In \emph{AAAI spring symposium: logical formalizations of commonsense reasoning}, pages 90--95.

\bibitem[{Rosenbaum and Rubin(1983)}]{rosenbaum1983central}
Paul~R Rosenbaum and Donald~B Rubin. 1983.
\newblock The central role of the propensity score in observational studies for causal effects.
\newblock \emph{Biometrika}, 70(1):41--55.

\bibitem[{Samland et~al.(2016)Samland, Josephs, Waldmann, and Rakoczy}]{samland2016role}
Jana Samland, Marina Josephs, Michael~R Waldmann, and Hannes Rakoczy. 2016.
\newblock The role of prescriptive norms and knowledge in children’s and adults’ causal selection.
\newblock \emph{Journal of Experimental Psychology: General}, 145(2):125.

\bibitem[{Schaffer(2003)}]{schaffer2003overdetermining}
Jonathan Schaffer. 2003.
\newblock Overdetermining causes.
\newblock \emph{Philosophical Studies: An International Journal for Philosophy in the Analytic Tradition}, 114(1/2):23--45.

\bibitem[{Sloman and Lagnado(2015)}]{sloman2015causality}
Steven~A Sloman and David Lagnado. 2015.
\newblock Causality in thought.
\newblock \emph{Annual review of psychology}, 66(1):223--247.

\bibitem[{Spellman(1997)}]{spellman1997crediting}
Barbara~A Spellman. 1997.
\newblock Crediting causality.
\newblock \emph{Journal of Experimental Psychology: General}, 126(4):323.

\bibitem[{Suzgun et~al.(2023)Suzgun, Scales, Sch{\"a}rli, Gehrmann, Tay, Chung, Chowdhery, Le, Chi, Zhou et~al.}]{suzgun2023challenging}
Mirac Suzgun, Nathan Scales, Nathanael Sch{\"a}rli, Sebastian Gehrmann, Yi~Tay, Hyung~Won Chung, Aakanksha Chowdhery, Quoc Le, Ed~Chi, Denny Zhou, and 1 others. 2023.
\newblock Challenging big-bench tasks and whether chain-of-thought can solve them.
\newblock In \emph{Findings of the Association for Computational Linguistics: ACL 2023}, pages 13003--13051.

\bibitem[{Van~Hoeck et~al.(2015)Van~Hoeck, Watson, and Barbey}]{van2015cognitive}
Nicole Van~Hoeck, Patrick~D Watson, and Aron~K Barbey. 2015.
\newblock Cognitive neuroscience of human counterfactual reasoning.
\newblock \emph{Frontiers in human neuroscience}, 9:420.

\bibitem[{Waldmann(2017)}]{waldmann2017oxford}
Michael Waldmann. 2017.
\newblock \emph{The Oxford handbook of causal reasoning}.
\newblock Oxford University Press.

\bibitem[{Weiner(1985)}]{weiner1985attributional}
Bernard Weiner. 1985.
\newblock An attributional theory of achievement motivation and emotion.
\newblock \emph{Psychological review}, 92(4):548.

\bibitem[{Wells and Gavanski(1989)}]{wells1989mental}
Gary~L Wells and Igor Gavanski. 1989.
\newblock Mental simulation of causality.
\newblock \emph{Journal of personality and social psychology}, 56(2):161.

\bibitem[{Woodward(2006)}]{woodward2006sensitive}
James Woodward. 2006.
\newblock Sensitive and insensitive causation.
\newblock \emph{The Philosophical Review}, 115(1):1--50.

\bibitem[{Ze{\v{c}}evi{\'c} et~al.(2023)Ze{\v{c}}evi{\'c}, Willig, Dhami, and Kersting}]{zevceviccausal}
Matej Ze{\v{c}}evi{\'c}, Moritz Willig, Devendra~Singh Dhami, and Kristian Kersting. 2023.
\newblock \href {https://openreview.net/forum?id=tv46tCzs83} {Causal parrots: Large language models may talk causality but are not causal}.
\newblock \emph{Transactions on Machine Learning Research}.

\end{thebibliography}

\clearpage
\appendix
\onecolumn
\appendixtableofcontents

\clearpage

\section{Background}\label{more-definitions}
\subsection{The Categories of Causality}
Causality is commonly divided into \textit{type causality} and \textit{actual causality} \citep{kiciman2023causal}.
Type causality (or general causality) concerns general causal relationships and effects between variables within an underlying causal structure.
In contrast, actual causality (or token/specific causality) evaluates the extent to which particular events cause other specific events \citep{halpern2016actual,hausman2005causal}.
For example, the question \textit{Does smoking causes lung cancer?} pertains to type causality, while \textit{Was Fred's smoking habit responsible for his lung cancer?} pertains to actual causality.

\subsection{The Factors in Actual Causality and Causal Judgment}
For actual causality factors, we consider the HP definition of causality (Definition \ref{definition-1}), the definition of necessary cause (Definition \ref{def:but-for}), and the definition of sufficient cause (Definition \ref{definition-2}).
In causal judgment, the relevant factors have been summarized by domain experts \citep{nie2023moca} and discussed in Section \ref{sec_2_4}.
Proposition \ref{proposition-1} formally proves that an actual cause is an extension of a necessary cause.

\begin{definition}\label{definition-2}
    $\bm{X}=\bm{x}$ is a sufficient cause of $\varphi$ in the causal setting $(M,\bm{u})$ if the following four conditions hold:
    \begin{itemize}
        \item[SC1.] $(M,\bm{u})\models(\bm{X}=\bm{x})$ and $(M,\bm{u})\models\varphi$.
        \item[SC2.] Some conjunct of $\bm{X}=\bm{x}$ is part of a cause of $\varphi$ in the causal setting $(M,\bm{u})$.
        \item[SC3.] $(M,\bm{u}')\models [\bm{X}\leftarrow\bm{x}]\varphi$ for all contexts $\bm{u}'$.
        \item[SC4.] $\bm{X}$ is minimal; there is no strict subset $\bm{X}'$ of $\bm{X}$ such that $\bm{X}'=\bm{x}'$ satisfies conditions SC1, SC2, and SC3, where $\bm{x}'$ is the restriction of $\bm{x}$ to the variables in $\bm{X}$.
    \end{itemize}
\end{definition}

\begin{proposition}\label{proposition-1}
If $\bm{X}=\bm{x}$ is a but-for cause of $\varphi$ in the causal setting $(M,\bm{u})$, then $\bm{X}=\bm{x}$ is a cause of $\varphi$ according to the HP definition.
\end{proposition}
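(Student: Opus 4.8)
The plan is to verify directly that the three conditions AC1, AC2, and AC3 of Definition~\ref{definition-1} all hold whenever $\bm{X}=\bm{x}$ is a but-for cause of $\varphi$ in $(M,\bm{u})$. Recall what ``but-for cause'' means here: AC1 holds, and there is a setting $\bm{x}'$ of the variables in $\bm{X}$ with $(M,\bm{u})\models[\bm{X}\leftarrow\bm{x}']\neg\varphi$, where $\bm{X}$ is taken minimal with respect to this counterfactual property (for a single-variable cause this minimality is automatic). So the task is to check that these ingredients already furnish the witnesses demanded by AC2 and AC3.

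AC1 requires no work: it is literally part of the definition of a but-for cause, so $(M,\bm{u})\models(\bm{X}=\bm{x})$ and $(M,\bm{u})\models\varphi$ hold by hypothesis. For AC2, I would take the witness set to be $\bm{W}=\emptyset$ and retain the alternative setting $\bm{x}'$ supplied by the but-for property. Then the actual assignment $\bm{w}^*$ to $\bm{W}$ is the empty assignment, the premise ``$(M,\bm{u})\models\bm{W}=\bm{w}^*$'' is vacuously true, and the required conclusion $(M,\bm{u})\models[\bm{X}\leftarrow\bm{x}',\bm{W}\leftarrow\bm{w}^*]\neg\varphi$ collapses to exactly $(M,\bm{u})\models[\bm{X}\leftarrow\bm{x}']\neg\varphi$, which is the defining counterfactual of a but-for cause. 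Hence AC2 holds with this choice.

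The remaining condition, AC3 (minimality), is where care is needed, and I expect it to be the main obstacle. When $\bm{X}$ is a single variable there is no nonempty proper subset to consider and AC3 is immediate; the content is in the general conjunctive case. There I would argue by contradiction: assume some strict subset $\bm{X}'\subsetneq\bm{X}$, taken with the restriction $\bm{x}'$ of $\bm{x}$, satisfies AC1 and AC2, witnessed by a set $\bm{W}$ of variables held at their actual values $\bm{w}^*$ together with an alternative setting of $\bm{X}'$ that forces $\neg\varphi$; I would then try to convert this AC2-witness into a plain counterfactual intervention on a set of variables no larger than $\bm{X}$ (absorbing the relevant part of $\bm{W}$ into the intervention and exploiting that those variables sit at their actual values), thereby contradicting the minimality of $\bm{X}$ as a but-for cause. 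The delicate point is precisely that AC2 permits a nonempty $\bm{W}$ fixed at its actual values, so ``$\bm{X}'$ satisfies AC2'' is genuinely weaker than ``$\bm{X}'$ is a but-for cause''; one must reason carefully about how fixing $\bm{W}$ propagates through the (recursive) structural equations in order to recover an honest but-for witness, and this is where any real effort in the proof lies. Once AC3 is secured, combining it with the AC1 and AC2 observations above gives that $\bm{X}=\bm{x}$ is a cause of $\varphi$ under Definition~\ref{definition-1}.
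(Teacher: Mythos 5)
Your treatment of AC1 and AC2 is exactly the paper's proof: the paper likewise takes $\bm{W}=\emptyset$ together with the but-for witness $\bm{x}'$, so that the AC2 counterfactual collapses to $(M,\bm{u})\models[\bm{X}\leftarrow\bm{x}']\neg\varphi$, and treats AC1 as immediate from the definition of a but-for cause. Where you diverge is AC3: the paper's proof simply does not mention it, whereas you correctly flag it as the only nontrivial point in the conjunctive case. However, your proposed resolution is a plan rather than a proof, and the step you yourself identify as delicate --- converting an AC2-witness for a strict subset $\bm{X}'\subsetneq\bm{X}$ (which may use a nonempty $\bm{W}$ held at actual values) back into an honest but-for counterfactual that would contradict minimality of $\bm{X}$ --- does not go through in general; moreover, the minimality you build into your reading of ``but-for cause'' is not part of the standard but-for test, and without it the proposition is false for vector $\bm{X}$ (pad a genuine but-for cause with an irrelevant conjunct: the but-for counterfactual survives but AC3 fails). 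The intended reading, which is how both Halpern's original statement and this paper's usage resolve the issue, is that $\bm{X}$ is a single variable (an atomic causal event), in which case AC3 is vacuous and your AC1/AC2 argument is the whole proof. So: same approach as the paper on the part the paper proves; your extra AC3 effort is well-motivated but incomplete, and the clean fix is to restrict the statement to singletons rather than to try to repair the contradiction argument.
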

\begin{proof}
Suppose $\bm{X}=\bm{x}$ is a but-for cause of $\varphi$. Then there exists at least a possible value $\bm{x}'$ such that $(M,\bm{u}) \models [\bm{X} \leftarrow \bm{x}'] \lnot \varphi$.
Let $\bm{W} = \emptyset$ and consider the intervention $\bm{X} \leftarrow \bm{x}'$. Then AC2 is satisfied.
Therefore, $\bm{X} = \bm{x}$ qualifies as an actual cause, as a special case where $\bm{W} = \emptyset$.
\end{proof}

\noindent We also include \textit{responsibility} as a factor.
In natural language contexts, responsibility is a multi-factorial and qualitative indicator of causal contribution, which can be practically defined as the relative degree to which a causal event contributes causally to an outcome event in comparison to other contributing events \citep{kiciman2023causal}.
To sum up, we consider actual causality factors $f_{\mathrm{ac}}$, $f_{\mathrm{nc}}$, $f_{\mathrm{sc}}$, and $f_{\mathrm{resp}}$.
Additionally, we include the factor $f_{\mathrm{occur}}$ to ensure the satisfaction of AC1 in Definition~\ref{definition-1}.
For causal judgment factors, we consider $f_{\mathrm{n}}$ and $f_{\mathrm{i}}$. The temporal order of events is modeled using the factor $f_{\mathrm{order}}$. It is worth noting that the notion of a ``causal structure'' implicitly corresponds to the factors $f_{\mathrm{sc}}$ and $f_{\mathrm{nc}}$: a conjunctive structure implies necessary causes, while a disjunctive structure implies sufficient causes.

\subsection{A Failure Case of But-for Causation}\label{supp:examples}
A classic example of preemption \cite{lewis2000causation} is as follows:
\begin{quote}
    \textit{Suzy and Billy both pick up rocks and throw them at a bottle. Suzy's rock gets there first, shattering the bottle. Because both throws are perfectly accurate, Billy's would have shattered the bottle had it not been preempted by Suzy's throw.}
\end{quote}
Here, the but-for test fails. According to the but-for definition, neither Suzy's nor Billy's throw would be identified as the cause of the bottle shattering.
However, Definition \ref{definition-1} can capture this nuance.

\begin{proof}
We construct a structural causal model with the following variables:
\begin{itemize}
    \item $ST$ for ``Suzy throws,'' with values 0 (she does not) and 1 (she does);
    \item $BT$ for ``Billy throws,'' with values 0 (he does not) and 1 he does);
    \item $SH$ for ``Suzy's rock hits the bottle,'' with values 0 (it does not) and 1 (it does);
    \item $BH$ for ``Billy's rock hits the bottle,'' with values 0 (it does not) and 1 (it does);
    \item $BS$ for ``bottle shatters,'' with values 0 (it does not) and 1 (it does).
\end{itemize}
The corresponding structural equations are defined as:
\begin{itemize}
    \item $BS=1$ if $SH=1$ or $BH=1$;
    \item $SH=1$ if $ST=1$;
    \item $BH=1$ if $BT=1$ and $SH=0$.
\end{itemize}
\begin{figure}[htbp]
    \centering
    \begin{tikzpicture}[>=Stealth, node distance=2cm]
        \node (ST) at (0,3) {$ST$};
        \node (BT) at (3,3) {$BT$};
        \node (SH) at (0,1.5) {$SH$};
        \node (BH) at (3,1.5) {$BH$};
        \node (BS) at (1.5,0) {$BS$};
        \draw[->, thick] (ST) -- (SH);
        \draw[->, thick] (BT) -- (BH);
        \draw[->, thick] (SH) -- (BH);
        \draw[->, thick] (SH) -- (BS);
        \draw[->, thick] (BH) -- (BS);
    \end{tikzpicture}
    \caption{The causal representation of the preemption case.}
    \label{fig:preemption}
\end{figure}
According to Definition \ref{definition-1}, $ST=1$ is a cause of $BS=1$, whereas $BT=1$ is not.
First, it is clear that both $ST=1$ and $BS=1$ satisfy AC1 and AC3.
For $ST=1$, if Suzy had not thrown ($ST\rightarrow 0$) and $\bm{W}=\{BH\}$, then the bottle would not have shattered ($BS=0$) since we have forced Billy's rock not to hit the bottle ($BH\rightarrow 0$). Thus, $ST=1$ is an actual cause of $BS=1$.
For $BT=1$, if Billy had not thrown ($BT\rightarrow 0$), all other variables would remain unchanged, including $BS$. Therefore, $BT=1$ is not an actual cause of $BS=1$ since no suitable $\bm{W}$ can be found to satisfy AC2.
\end{proof}

\section{Details of \ourmodel}\label{app:details-model}
\subsection{Details of Algorithm \ref{algorithm}}\label{app:proofs}
\subsubsection{Notations}
Formally, we consider the following factors in actual causality and causal judgment:
\[
(f_{\mathrm{ac}}, f_{\mathrm{nc}}, f_{\mathrm{sc}}, f_{\mathrm{n}}, f_{\mathrm{i}}, f_{\mathrm{occur}}, f_{\mathrm{order}}, f_{\mathrm{resp}}).
\]
Let $E$ be a focal causal event and $O$ be the outcome event. We define the following:

\begin{itemize}
    \item $f_{\mathrm{ac}}^E$: Whether $E$ is an actual cause of $O$ (based on Definition \ref{definition-1}).
    \item $f_{\mathrm{nc}}^E$: Whether $E$ is part of a disjunctive structure, i.e., a sufficient cause of $O$.
    \item $f_{\mathrm{sc}}^E$: Whether $E$ is part of a conjunctive structure, i.e., a necessary cause of $O$.
    \item $f_{\mathrm{n}}^E$: Whether $E$ violates a prescriptive or descriptive norm.
    \item $f_{\mathrm{i}}^E$: Whether $E$ is an agent's action, and the agent is aware of the potential outcomes of their action and knowingly performs an action that leads to a foreseeable bad outcome.
    \item $f_{\mathrm{occur}}^E$: Whether $E$ actually occurs in the causal setting.
    \item $f_{\mathrm{order}}^E$: The temporal order of $E$ relative to other events in the causal setting. It is an integer starting from 0, where simultaneous events share the same $f_{\mathrm{order}}$.
    \item $f_{\mathrm{resp}}^E$: The responsibility of $E$ relative to other causal events specified. It has different definitions in different settings.
\end{itemize}

\subsubsection{Reasoning Process Partitioning}

\noindent First, for reasoning process partitioning (Lines 4, 5, 6, and 15), we use factors $(f_{\mathrm{ac}}^E,f_{\mathrm{nc}}^E,f_{\mathrm{sc}}^E$). When $E$ is sufficient, the partitioning is based on $f_{\mathrm{nc}}^E$; when $E$ is not sufficient, the partitioning is based on $f_{\mathrm{ac}}^E$. 
The objective of partitioning is to ensure that, within each partition, the causal relationship between $E$ and $O$ can be assessed using existing techniques from actual causality and causal judgment.

\begin{tcolorbox}
\textbf{Case 1:} $f_{\mathrm{sc}}^E$ holds (i.e., $E$ is sufficient). The partitioning is based on $f_{\mathrm{nc}}^E$.
\begin{proof}
If $f_{\mathrm{nc}}^E$ holds, $f_{\mathrm{ac}}^E$ holds by Proposition \ref{proposition-1}. Since $f_{\mathrm{sc}}^E$ also holds, $E$ has 100\% responsibility for $O$, making the causal relationship determined. 
If $\lnot f_{\mathrm{nc}}^E$ holds, this corresponds to preemption \citep{halpern2005causes,halpern2016actual} or overdetermination \citep{schaffer2003overdetermining}, which are well-studied and defined primarily based on $f_{\mathrm{nc}}$ rather than $f_{\mathrm{ac}}$.
\end{proof}
\end{tcolorbox}

\begin{tcolorbox}
\textbf{Case 2:} $\lnot f_{\mathrm{sc}}^E$ holds (i.e., $E$ is not sufficient). The partitioning is based on $f_{\mathrm{ac}}^E$.
\begin{proof}
If $f_{\mathrm{ac}}^E$ holds, $E$ is at least part of a cause by Definition \ref{definition-1}. We then only need to consider factors that increase causal strength.
If $\lnot f_{\mathrm{ac}}^E$ holds, $\lnot f_{\mathrm{nc}}^E$ holds by Proposition \ref{proposition-1}. Since $\lnot f_{\mathrm{sc}}^E$ also holds, $E$ has 0\% responsibility for $O$, making the causal relationship determined.
\end{proof}
\end{tcolorbox}

\noindent The partitions are exhaustive, as they jointly cover the entire space of possibilities. Furthermore, within each partition, the causal relationship between $E$ and $O$ can be effectively assessed using existing theory from actual causality and causal judgment. Therefore, the partitioning process is well-founded.

\subsubsection{Rules within Each Partition}

Then, we will justify how to assess the causal relationship within each partition using rules.

\begin{tcolorbox}
\textbf{Partition 1 (Line 4)}
\begin{proof}
$E$ is necessary, thus it is also an actual cause (Proposition \ref{proposition-1}). Therefore, $E$ is a sufficient actual cause (100\% responsibility), thus a cause of $O$.

\[
f_{\mathrm{nc}}^E \land f_{\mathrm{sc}}^E \Rightarrow f_{\mathrm{ac}}^E \land f_{\mathrm{sc}}^E \Rightarrow E \text{ has 100\% responsibility for } O \Rightarrow E \text{ is a cause of } O.
\]
\end{proof}
\end{tcolorbox}

\begin{tcolorbox}
\textbf{Partition 2 (Line 5)} 
\begin{proof}
$E$ is not an actual cause, thus it is also unnecessary (Proposition \ref{proposition-1}). Therefore, $E$ is insufficient and unnecessary (0\% responsibility), thus not a cause of $O$.

\[
\lnot f_{\mathrm{ac}}^E \land \lnot f_{\mathrm{sc}}^E \Rightarrow \lnot f_{\mathrm{nc}}^E \land \lnot f_{\mathrm{sc}}^E \Rightarrow E \text{ has 0\% responsibility for } O \Rightarrow E \text{ is not a cause of } O.
\]
\end{proof}
\end{tcolorbox}

\begin{tcolorbox}[breakable]
\textbf{Partition 3 (Lines 6-14)}
\begin{proof}
    $E$ is unnecessary but sufficient, indicating the existence of a set of alternative sufficient causal events $\mathcal{C}_s$, each of which can independently cause $O$. 
    In actual causality, typically only one causal event actually takes effect \citep{halpern2016actual}.
    If events in $\mathcal{C}_s$ do not occur simultaneously, the one that occurs first is judged as the preemptive cause of $O$ (Line 9), while the others are not (Line 10) \citep{halpern2005causes,halpern2016actual}.
    Otherwise, if events in $\mathcal{C}_s$ occur simultaneously, their responsibilities should be compared. Here, responsibility is defined as the relative degree to which $E$ causally contributes to $O$ compared with other events in $\mathcal{C}s$. Causal contribution is evaluated using the remaining factors, i.e., $f_{\mathrm{resp}}^E\leftarrow \Phi (f_{\mathrm{n}}^E,f_{\mathrm{i}}^E,f_{\mathrm{order}}^E)$.
    If $E$ has the highest responsibility for $O$, it is a judged cause of $O$ (Line 13) since it actually takes effect; otherwise, it is not (Line 14). In the special case where all events in $\mathcal{C}_s$ have equal responsibility, the scenario corresponds to overdetermination \citep{schaffer2003overdetermining}, where each event in $\mathcal{C}_s$ is a judged cause of $O$.
    Formally, let $\mathcal{C}_s$ be the set of causal events that satisfy $\lnot f_{\mathrm{nc}} \land f_{\mathrm{sc}}$ (including $E$), then:
    \begin{equation*}
        \begin{aligned}
            \exists E_s \in \mathcal{C}_s,f_{\mathrm{order}}^{E_s} < f_{\mathrm{order}}^E &\Rightarrow E \text{ does not actually take effect} \\
            &\Rightarrow E \text{ is not a judged cause of } O.
        \end{aligned}
    \end{equation*}
    \begin{equation*}
        \begin{aligned}
            \forall E_s \in \mathcal{C}_s,f_{\mathrm{order}}^{E_s} \geq f_{\mathrm{order}}^E &\Rightarrow E \text{ actually takes effect} \\
            &\Rightarrow E \text{ is a judged cause of } O.
        \end{aligned}
    \end{equation*}
    \begin{equation*}
        \begin{aligned}
            \forall E_s \in \mathcal{C}_s,f_{\mathrm{order}}^{E_s} = f_{\mathrm{order}}^E\land f_{\mathrm{resp}}^E=\max_{E_s\in\mathcal{C}_s}f_{\mathrm{resp}}^{E_s} &\Rightarrow E \text{ actually takes effect} \\
            &\Rightarrow E \text{ is a judged cause of } O.
        \end{aligned}
    \end{equation*}
    \begin{equation*}
        \begin{aligned}
            \forall E_s \in \mathcal{C}_s,f_{\mathrm{order}}^{E_s} = f_{\mathrm{order}}^E\land f_{\mathrm{resp}}^E\neq\max_{E_s\in\mathcal{C}_s}f_{\mathrm{resp}}^{E_s} &\Rightarrow E \text{ does not actually take effect} \\
            &\Rightarrow E \text{ is not a judged cause of } O.
        \end{aligned}
    \end{equation*}
\end{proof}
\end{tcolorbox}

\begin{tcolorbox}[breakable]
\textbf{Partition 4 (Lines 15-21)}
\begin{proof}
    Since $E$ is an actual cause, it is at least part of a cause by Definition~\ref{definition-1}. To determine whether $E$ is a judged cause of $O$, we consider conditions that enhance causal strength, including norm violations and intended behaviors. If $E$ satisfies either of these conditions, it is a judged cause of $O$ (Line 16). If neither condition is met, we turn to its responsibility $f_{\mathrm{resp}}^E\leftarrow\Phi(f_{\mathrm{order}}^E)$. If $E$ is uniquely the most responsible for $O$, it is a judged cause of $O$ (Line 20); otherwise, it is not (Line 21).
    Formally, let $\mathcal{C}_a$ be the set of causal events that satisfy $f_{\mathrm{ac}}^E \land \lnot f_{\mathrm{sc}}$ (including $E$), then:
    
    \[
    f_{\mathrm{ac}}^E \land (f_{\mathrm{n}}^E \lor f_{\mathrm{i}}^E) \Rightarrow \text{The causal strength of } E \text{ is enhanced} \Rightarrow E \text{ is a judged cause of } O.
    \]
    
    \begin{equation*}
        \begin{aligned}
            f_{\mathrm{ac}}^E \land \lnot (f_{\mathrm{n}}^E \lor f_{\mathrm{i}}^E)\land \forall E_a\in\mathcal{C}_a,f_{\mathrm{resp}}^E>f_{\mathrm{resp}}^{E_a} &\Rightarrow \text{The causal strength of } E \text{ is enhanced} \\
            &\Rightarrow E \text{ is a judged cause of } O.
        \end{aligned}
    \end{equation*}
    
    \begin{equation*}
        \begin{aligned}
            f_{\mathrm{ac}}^E \land \lnot (f_{\mathrm{n}}^E \lor f_{\mathrm{i}}^E)\land \exists E_a\in\mathcal{C}_a,f_{\mathrm{resp}}^E\leq f_{\mathrm{resp}}^{E_a} &\Rightarrow \text{The causal strength of } E \text{ is not enhanced} \\
            &\Rightarrow E \text{ is not a judged cause of } O.
        \end{aligned}
    \end{equation*}
\end{proof}

\end{tcolorbox}

\subsection{Details of Explanation Generation}\label{app:explanation-generation}
In Algorithm~\ref{algorithm}, an explanation is generated for each answer. Most explanations are produced using predefined templates (Lines 4, 5, 9-10 and 16). For responsibility-related decisions (Lines 13-14 and 20-21), the explanations are generated by the LLM responsible for determining responsibility.
First, we present the templates for producing explanations.
In each template, the placeholder $E$ is replaced with the actual causal event in the current for-loop, and $O$ is replaced with the actual outcome event.

\begin{itemize}
    \item[L04.] $E$ is a cause of $O$, since $E$ is both sufficient and necessary.
    \item[L05.] $E$ is not a cause of $O$, since $E$ is neither sufficient nor necessary.
    \item[L09.] $E$ is a cause of $O$, since $E$ occurs the earliest in a disjunctive causal structure.
    \item[L10.] $E$ is not a cause of $O$, since $E$ does not occur the earliest in a disjunctive causal structure.
    \item[L16.] $E$ is a cause of $O$, since $E$ is an actual cause and it violates a norm or is an intended behavior of an agent.
\end{itemize}

\noindent Second, we provide examples of LLM-generated explanations.
\begin{tcolorbox}[enhanced, breakable, colframe=blue!70!black, colback=blue!5, title=Example for Line 13.]
Yes, the focal causal event ``Billy arrives at 9 am'' is equally responsible relative to the other causal event ``Suzy arrives at 9 am''. Both Billy and Suzy arrived at the same time, triggering the motion detector. Although Billy was explicitly told not to come, Suzy was also explicitly told to come at exactly 9 am, making both of them equally responsible for the outcome.
\end{tcolorbox}

\begin{tcolorbox}[enhanced, breakable, colframe=blue!70!black, colback=blue!5, title=Example for Line 14.]
No. In the story, Drew, Kylie, and Oliver all ordered coffee on the same day, and any one of them ordering coffee would have been sufficient for the coffee shop to make a profit. However, Drew's action of ordering coffee on a Tuesday was unexpected and violated the norm, as he usually doesn't order coffee on Tuesdays. This makes Drew's action more noticeable, but it doesn't make him more responsible for the coffee shop's profit than Kylie or Oliver, who also ordered coffee.
\end{tcolorbox}

\begin{tcolorbox}[enhanced, breakable, colframe=blue!70!black, colback=blue!5, title=Example for Line 20.]
Yes, the focal causal event ``Brother steps on a slippery patch'' is more responsible relative to other causal events in the list. This is because it occurs later in the sequence of events, closer to the outcome event. The brother stepping on the slippery patch directly leads to him knocking over the lamp, which then fractures the water pipe, causing the damage. The other events, while they set the stage for this to happen, are further removed from the outcome event in terms of temporal order.
\end{tcolorbox}

\begin{tcolorbox}[enhanced, breakable, colframe=blue!70!black, colback=blue!5, title=Example for Line 21.]
No. Based on the story, Alice's action of logging in at 9.30 am is not more responsible than Zoe's action of logging in at 9 am for the outcome event. This is because both actions are necessary for the outcome to occur, and Zoe's action, which violates the company policy, happens first. Therefore, Zoe's action has a higher temporal order and is considered more responsible for the outcome.
\end{tcolorbox}

\section{Details of \ourbench}\label{app:details-data}
\subsection{Details of Data Cleaning}
We perform the following operations to clean the Big-Bench Hard causal judgment dataset:
\begin{itemize}
    \item[OP1.] Remove \textit{duplicate} samples.
    \item[OP2.] Remove \textit{erroneous} samples. For example, samples missing critical background information.
    \item[OP3.] Remove \textit{irrelevant} samples. For examples, samples querying about intention.
    \item[OP4.] Correct \textit{partially flawed} samples. These are samples with issues in the story background, query, or answer. In many cases, errors in the query and answer co-occur. For instance, a duplicated or malformed query often results in an incorrect answer.
\end{itemize}
We also provide examples illustrating the last three type of operation below.

\begin{tcolorbox}[enhanced, breakable, colframe=blue!70!black, colback=blue!5, title=Erroneous Sample.]
\texttt{\# The ``motion detector'' has not appeared in the story background.}

\textbf{Story:} Suzy and Billy are working on a project that is very important for our nation's security. The boss tells Suzy: ``Be sure that you are here at exactly 9 am. It is absolutely essential that you arrive at that time.'' Then he tells Billy: ``Be sure that you do not come in at all tomorrow morning. It is absolutely essential that you not appear at that time.'' Both Billy and Suzy arrive at 9 am.

\textbf{Query:} Did Billy cause the \textcolor{red}{motion detector} to go off?

\textbf{Answer:} No
\end{tcolorbox}

\begin{tcolorbox}[enhanced, breakable, colframe=blue!70!black, colback=blue!5, title=Irrelevant Sample.]
\texttt{\# The sample is querying about intention.}

\textbf{Story:} The CEO of a company is sitting in his office when his Vice President of R\&D comes in and says, ``We are thinking of starting a new programme. It will help us increase profits, but it will also harm the environment.'' The CEO responds that he doesn't care about harming the environment and just wants to make as much profit as possible. The programme is carried out, profits are made and the environment is harmed.

\textbf{Query:} Did the CEO \textcolor{red}{intentionally} harm the environment?

\textbf{Answer:} Yes
\end{tcolorbox}

\begin{tcolorbox}[enhanced, breakable, colframe=blue!70!black, colback=blue!5, title=Partially Flawed Sample.]
\texttt{\# In the story, ``work emails'' is mistakenly written as ``spam emails''.}

\textbf{Story:} Billy and Suzy work for the same company. They work in different rooms, and both of them sometimes need to access the central computer of the company. Nobody at the company is aware that if two people are logged into the central computer at the same time, some \textcolor{red}{spam} \textcolor{red}{emails} containing important customer information are immediately deleted from the central computer. In order to make sure that two people are available to answer phone calls during designated calling hours, the company issued the following official policy: Billy and Suzy are both permitted to log into the central computer in the mornings, and neither of them are permitted to log into the central computer in the afternoons. Today at 9 am, Billy and Suzy both log into the central computer at the same time. Immediately, some work emails containing important customer information are deleted from the central computer.

\textbf{Query:} Did Suzy cause the central computer to delete some work emails containing important customer information?

\textbf{Answer:} Yes
\end{tcolorbox}

\subsection{Details of Data Annotation}
The labeling process is conducted by a graduate student focusing on causal reasoning and LLMs.
The annotation criteria are presented in Table~\ref{tab:annotation-guidelines}. The annotation process follows the pipeline below:
\begin{itemize}
    \item[S1.] Annotate $\mathcal{C}$ and $O$. We first manually annotate the events. Then, we use GPT-4 to perform event detection and refine our annotations by comparing them with GPT-4's results.
    \item[S2.] Annotate $(f_{\mathrm{occur}},f_{\mathrm{order}})$. The occurrences and temporal orders of events are usually explicit in the story, making them relatively straightforward to label.
    \item[S3.] Annotate $(f_{\mathrm{ac}},f_{\mathrm{nc}},f_{\mathrm{sc}},f_{\mathrm{n}},f_{\mathrm{i}})$. This step is more labor-intensive due to the complexity of the factors (especially $f_{\mathrm{ac}}$). 
    First, we group the factors into two sets: $(f_{\mathrm{ac}},f_{\mathrm{nc}},f_{\mathrm{sc}})$ and $(f_{\mathrm{n}},f_{\mathrm{i}})$. In the first group, the annotation follows the order $f_{\mathrm{nc}}\rightarrow f_{\mathrm{ac}}\rightarrow f_{\mathrm{sc}}$. This order is based on dependency: $f_{\mathrm{nc}}$ is the simplest to determine, and if it holds, $f_{\mathrm{ac}}$ necessarily holds as well; in turn $f_{\mathrm{sc}}$ depends on $f_{\mathrm{ac}}$. Only when $\lnot f_{\mathrm{nc}}$ holds do we need to assess $f_{\mathrm{ac}}$ by considering \textit{contingencies} in AC2.
    In AC2, contingencies refer to holding the values of $\bm{W}$ fixed at their original values when intervening to set $\bm{X}$ to $\bm{x}'$. The second group, $(f_{\mathrm{n}},f_{\mathrm{i}})$, is relatively straightforward to annotate, as these factors are also explicit in the story.
    Second, to further alleviate cognitive load, we input the annotated events into GPT-4 and use its predictions as auxiliary references. We then carefully revise the annotations in accordance with the definitions in the annotation guideline.
\end{itemize}

\noindent While the average human accuracy on Big-Bench Hard causal judgment is only 69.60\% \citep{suzgun2023challenging}, this does not reflect the quality of our annotations. First, we decompose each problem into intermediate reasoning steps, allowing annotators to better understand and process each sample. Second, we leverage LLM-generated reasoning steps as auxiliary information to reduce cognitive load. Finally, after labeling, each sample is manually reviewed multiple times to ensure the quality of the resulting ``seed samples'' used for data generation.

\begin{table}[h]
\caption{Annotation guideline.}
\small
\label{tab:annotation-guidelines}
\resizebox{\columnwidth}{!}{
\begin{tabular}{l l p{0.7\columnwidth}}
\toprule
\textbf{Factors} & \textbf{Values} & \textbf{Definitions} \\
\midrule

\multirow{2}{*}{$\boxed{f_{\mathrm{occur}}^E}$} 
 & \textsc{True}  & $E$ actually occurs in the causal setting. \\
 & \textsc{False} & Otherwise. \\
\midrule

\multirow{1}{*}{$\boxed{f_{\mathrm{order}}^E}$} 
 & / & An integer starting from 0, denoting the temporal order of $E$ relative to other events in the causal setting. If two events occur simultaneously, they should share the same $f_{\mathrm{order}}$. \\
\midrule

\multirow{2}{*}{$\boxed{f_{\mathrm{ac}}^E}$} 
 & \textsc{True}  & $E$ is an actual cause of $O$ by Definition \ref{definition-1}. If 1) $f_{\mathrm{occur}}^E\land f_{\mathrm{occur}}^O$ holds and 2) $f_{\mathrm{nc}}^E$ holds while allowing contingencies based on AC2, then $f_{\mathrm{ac}}^E$ holds. 
 \textcolor{red}{[Proof: AC1 holds if $f_{\mathrm{occur}}^E\land f_{\mathrm{occur}}^O$ holds; AC2 holds if $f_{\mathrm{nc}}^E$ holds while allowing contingencies; AC3 already holds since we only annotate minimal/atomic and non-overlapping causal events.]} \\
 & \textsc{False} & Otherwise. \\
\midrule

\multirow{2}{*}{$\boxed{f_{\mathrm{nc}}^E}$} 
 & \textsc{True}  & $E$ is a necessary cause of $O$ by the but-for definition. If but for $E$, $O$ would not have occurred, then $f_{\mathrm{nc}}^E$ holds. \\
 & \textsc{False} & Otherwise. \\
\midrule

\multirow{2}{*}{$\boxed{f_{\mathrm{sc}}^E}$} 
 & \textsc{True}  & $E$ is a sufficient cause of $O$ by Definition \ref{definition-2}. If 1) $f_{\mathrm{ac}}^E$ holds and 2) $E$ always suffices to cause $O$ while changing the $f_{\mathrm{occur}}$ factor of other causal events, then $f_{\mathrm{sc}}^E$ holds. 
 \textcolor{red}{[Proof: SC1 holds if $f_{\mathrm{occur}}^E\land f_{\mathrm{occur}}^O$ holds; SC2 holds if $f_{\mathrm{ac}}^E$ holds; SC3 holds by counterfactual reasoning on causal events other than $E$; SC4 already holds since we only annotate minimal/atomic and non-overlapping causal events. Also, if $f_{\mathrm{ac}}^E$ holds, $f_{\mathrm{occur}}^E\land f_{\mathrm{occur}}^O$ holds.]} \\
 & \textsc{False} & Otherwise. \\
\midrule

\multirow{2}{*}{$\boxed{f_{\mathrm{n}}^E}$} 
 & \textsc{True}  & $E$ violates a prescriptive or descriptive norm. \\
 & \textsc{False} & Otherwise. \\
\midrule

\multirow{2}{*}{$\boxed{f_{\mathrm{i}}^E}$} 
 & \textsc{True}  & $E$ is an agent's behavior and the agent is aware of the potential consequences of their action and knowingly performs an action that leads to a foreseeable bad outcome. \\
 & \textsc{False} & Otherwise. \\ 
\bottomrule
\end{tabular}
}
\end{table}

\subsection{Details of Data Verification}\label{app:data-verification}

For seed samples, we primarily perform manual validation, whereas for generated samples, we conduct both automated and manual evaluations.

\paragraph{Seed Samples.}
We manually review each seed sample multiple times, iteratively refining our annotations using LLM-generated auxiliary information and the metadata provided by \citet{suzgun2023challenging}.

\paragraph{Generated Samples.}
On the one hand, we implement code to automatically check whether the reasoning logic, i.e., the causal setting and factor values, of each generated sample is consistent with its seed sample. Fewer than 1\% of samples are found inconsistent (mostly involving incorrect values for $f_{\mathrm{i}}$), which are then manually corrected.
Also, we verify whether the factor values of each generated sample corresponds to a correct gold answer using Algorithm \ref{algorithm}; all samples pass this automatic check. 
On the other hand, we randomly sample \textasciitilde33.3\% (310 out of 930) generated samples and evaluate whether 1) the story contains sufficient information for answering the query, 2) the reasoning logic is consistent with the story, and 3) the gold answer is correct. Only 2 samples (0.6\%) are found to lack essential reasoning information; we regenerate and verify these samples.

\section{Empirical Results}

\subsection{Pilot Experiment: Complete Results and Significance Tests}\label{app:pilot-complete}
The complete results of the pilot study is shown in Table \ref{tab:pilot-full}. 
We report both overall and fine-grained accuracies along with standard deviations over 10 runs. The results include 3 open-source and 3 closed-source LLMs under 4 different settings, providing comprehensive experimental coverage.
\uline{First, for causation queries, all models exhibit consistent performance improvements with the integration of \ourmodel, with open-source LLMs generally benefiting more.} For instance, GPT-4's accuracy on causation queries increase from 62.48\% to 74.61\% after applying \ourmodel. GPT-4 + \ourmodel also achieves the highest overall accuracy of 75.51\%, significantly surpassing the average human performance of 69.60\%. These results suggest that our work offers a promising direction for human-like causal reasoning in the era of LLMs. That is, incorporating domain theory into LLMs to enhance reasoning capabilities.
\uline{Second, neither zero-shot nor manual CoT is sufficient for human-like causal reasoning.} This suggests the inherent difficulty of the task.

\begin{table*}[h]
\centering
\caption{Complete results of the pilot study.}
\label{tab:pilot-full}
\resizebox{0.7\textwidth}{!}{
\begin{tabular}{@{}lccc@{}}
\toprule
\multicolumn{1}{l}{\textbf{Methods}} & \multicolumn{1}{c}{\textbf{Acc. (C.)}} & \multicolumn{1}{c}{\textbf{Acc. (I.)}} & \multicolumn{1}{c}{\textbf{Acc.}} \\ \midrule

\multicolumn{1}{l}{Human Average} & \multicolumn{1}{c}{-} & \multicolumn{1}{c}{-} & \multicolumn{1}{c}{69.60\%\textcolor{gray}{$\pm$0.00\%}} \\ \midrule

\multicolumn{1}{l}{Qwen2.5-32B-Instruct} & 65.25\%\textcolor{blue}{$\pm$0.32\%} & 80.43\%\textcolor{blue}{$\pm$0.00\%} & \multicolumn{1}{c}{68.98\%\textcolor{blue}{$\pm$0.24\%}} \\
\multicolumn{1}{l}{~~~~+ zero-shot CoT} & 61.70\%\textcolor{blue}{$\pm$0.78\%} & 75.22\%\textcolor{blue}{$\pm$1.74\%} & \multicolumn{1}{c}{65.03\%\textcolor{blue}{$\pm$0.49\%}} \\
\multicolumn{1}{l}{~~~~+ manual CoT} & 65.89\%\textcolor{blue}{$\pm$0.59\%} & 79.13\%\textcolor{blue}{$\pm$1.06\%} & \multicolumn{1}{c}{69.14\%\textcolor{blue}{$\pm$0.34\%}} \\
\multicolumn{1}{l}{~~~~+ \ourmodel} & 70.14\%\textcolor{blue}{$\pm$1.83\%} & 80.43\%\textcolor{blue}{$\pm$0.00\%} & \multicolumn{1}{c}{72.67\%\textcolor{blue}{$\pm$1.38\%}} \\ \midrule

\multicolumn{1}{l}{Qwen2.5-72B-Instruct} & 65.89\%\textcolor{blue}{$\pm$1.21\%} & 77.83\%\textcolor{blue}{$\pm$1.30\%} & \multicolumn{1}{c}{68.82\%\textcolor{blue}{$\pm$1.04\%}} \\
\multicolumn{1}{l}{~~~~+ zero-shot CoT} & 62.06\%\textcolor{blue}{$\pm$2.27\%} & 77.83\%\textcolor{blue}{$\pm$3.61\%} & \multicolumn{1}{c}{65.94\%\textcolor{blue}{$\pm$2.02\%}} \\
\multicolumn{1}{l}{~~~~+ manual CoT} & 63.12\%\textcolor{blue}{$\pm$2.01\%} & 79.13\%\textcolor{blue}{$\pm$1.06\%} & \multicolumn{1}{c}{67.06\%\textcolor{blue}{$\pm$1.59\%}} \\
\multicolumn{1}{l}{~~~~+ \ourmodel} & 71.63\%\textcolor{blue}{$\pm$0.90\%} & 79.13\%\textcolor{blue}{$\pm$1.44\%} & \multicolumn{1}{c}{73.48\%\textcolor{blue}{$\pm$0.49\%}} \\ \midrule

\multicolumn{1}{l}{DeepSeek-V3} & 66.45\%\textcolor{blue}{$\pm$1.96\%} & 77.61\%\textcolor{blue}{$\pm$1.70\%} & \multicolumn{1}{c}{69.20\%\textcolor{blue}{$\pm$1.76\%}} \\
\multicolumn{1}{l}{~~~~+ zero-shot CoT} & 67.38\%\textcolor{blue}{$\pm$2.73\%} & 74.57\%\textcolor{blue}{$\pm$3.37\%} & \multicolumn{1}{c}{69.14\%\textcolor{blue}{$\pm$1.82\%}} \\
\multicolumn{1}{l}{~~~~+ manual CoT} & 62.62\%\textcolor{blue}{$\pm$2.15\%} & 76.30\%\textcolor{blue}{$\pm$1.52\%} & \multicolumn{1}{c}{65.99\%\textcolor{blue}{$\pm$1.59\%}} \\
\multicolumn{1}{l}{~~~~+ \ourmodel} & 69.93\%\textcolor{blue}{$\pm$1.88\%} & 78.48\%\textcolor{blue}{$\pm$1.17\%} & \multicolumn{1}{c}{72.03\%\textcolor{blue}{$\pm$1.62\%}} \\ \midrule

\multicolumn{1}{l}{Claude-3.5-Sonnet} & 67.23\%\textcolor{blue}{$\pm$2.57\%} & 73.04\%\textcolor{blue}{$\pm$4.68\%} & \multicolumn{1}{c}{68.66\%\textcolor{blue}{$\pm$2.52\%}} \\
\multicolumn{1}{l}{~~~~+ zero-shot CoT} & 65.25\%\textcolor{blue}{$\pm$4.09\%} & 71.96\%\textcolor{blue}{$\pm$4.29\%} & \multicolumn{1}{c}{66.90\%\textcolor{blue}{$\pm$3.69\%}} \\
\multicolumn{1}{l}{~~~~+ manual CoT} & 59.65\%\textcolor{blue}{$\pm$3.21\%} & 84.57\%\textcolor{blue}{$\pm$2.27\%} & \multicolumn{1}{c}{65.78\%\textcolor{blue}{$\pm$2.37\%}} \\
\multicolumn{1}{l}{~~~~+ \ourmodel} & \uline{72.62\%}\textcolor{blue}{$\pm$1.11\%} & 75.43\%\textcolor{blue}{$\pm$4.24\%} & \multicolumn{1}{c}{73.32\%\textcolor{blue}{$\pm$1.54\%}} \\ \midrule

\multicolumn{1}{l}{GPT-4o-2024-11-20} & 54.89\%\textcolor{blue}{$\pm$1.53\%} & 85.22\%\textcolor{blue}{$\pm$1.90\%} & \multicolumn{1}{c}{62.35\%\textcolor{blue}{$\pm$1.29\%}} \\
\multicolumn{1}{l}{~~~~+ zero-shot CoT} & 61.99\%\textcolor{blue}{$\pm$1.20\%} & 77.39\%\textcolor{blue}{$\pm$3.25\%} & \multicolumn{1}{c}{65.78\%\textcolor{blue}{$\pm$1.15\%}} \\
\multicolumn{1}{l}{~~~~+ manual CoT} & 60.64\%\textcolor{blue}{$\pm$0.91\%} & 78.48\%\textcolor{blue}{$\pm$2.27\%} & \multicolumn{1}{c}{65.03\%\textcolor{blue}{$\pm$0.55\%}} \\
\multicolumn{1}{l}{~~~~+ \ourmodel} & 70.50\%\textcolor{blue}{$\pm$1.96\%} & 85.22\%\textcolor{blue}{$\pm$2.13\%} & \multicolumn{1}{c}{\uline{74.12\%}\textcolor{blue}{$\pm$1.36\%}} \\ \midrule

\multicolumn{1}{l}{GPT-4-0613} & 62.48\%\textcolor{blue}{$\pm$0.59\%} & 80.22\%\textcolor{blue}{$\pm$2.65\%} & \multicolumn{1}{c}{66.84\%\textcolor{blue}{$\pm$0.83\%}} \\
\multicolumn{1}{l}{~~~~+ zero-shot CoT} & 62.77\%\textcolor{blue}{$\pm$0.73\%} & 81.52\%\textcolor{blue}{$\pm$1.46\%} & \multicolumn{1}{c}{67.38\%\textcolor{blue}{$\pm$0.68\%}} \\
\multicolumn{1}{l}{~~~~+ manual CoT} & 65.89\%\textcolor{blue}{$\pm$0.92\%} & 78.26\%\textcolor{blue}{$\pm$0.97\%} & \multicolumn{1}{c}{68.93\%\textcolor{blue}{$\pm$0.65\%}} \\
\multicolumn{1}{l}{~~~~+ \ourmodel} & \textbf{74.61\%}\textcolor{blue}{$\pm$0.76\%} & 78.26\%\textcolor{blue}{$\pm$2.17\%} & \multicolumn{1}{c}{\textbf{75.51\%}\textcolor{blue}{$\pm$0.79\%}} \\ \bottomrule
\end{tabular}
}
\end{table*}

\noindent We have also conducted an approximate randomization test \cite{edgington1969approximate} to assess the statistical significance of \ourmodel on Big-Bench Hard causal judgment, since this dataset is relatively small in size. For each model, we evaluate \ourmodel and 3 baselines across 10 runs. Each test is repeated 30 times with 10000 trials per run to compute the mean $p$-value. A Bonferroni correction \cite{dunn1961multiple} is applied for multiple comparisons (18 in total: 6 models $\times$ 3 baselines), yielding a corrected significance threshold of approximately 0.00278. As shown below, \ourmodel achieves statistical significance in 18/18 comparisons and 16/18 comparisons after Bonferroni correction, despite the relatively small size of Big-Bench Hard causal judgment.

\begin{table*}[h]
    \centering
    \caption{Statistical significance of \ourmodel on Big-Bench Hard causal judgment.}
    \label{tab:ac_reason_significance}
    \resizebox{\textwidth}{!}{%
    \begin{tabular}{l l c c c c}
        \toprule
        \textbf{Model} & \textbf{Comparison} & \textbf{Accuracy Diff} & \textbf{$p$-value ($\pm$ std)} & \textbf{Significant} & \textbf{Significant (Bonferroni)} \\
        \midrule
        \multirow{3}{*}{Qwen2.5-32B-Instruct} & \ourmodel vs. Vanilla & $0.0369$ & $0.0020 \pm 0.0004$ & $\surd$ & $\surd$ \\
        & \ourmodel vs. Zero-shot CoT & $0.0765$ & $0.0021 \pm 0.0004$ & $\surd$ & $\surd$ \\
        & \ourmodel vs. Manual CoT & $0.0353$ & $0.0022 \pm 0.0004$ & $\surd$ & $\surd$ \\
        \midrule
        \multirow{3}{*}{Qwen2.5-72B-Instruct} & \ourmodel vs. Vanilla & $0.0465$ & $0.0022 \pm 0.0004$ & $\surd$ & $\surd$ \\
        & \ourmodel vs. Zero-shot CoT & $0.0754$ & $0.0022 \pm 0.0003$ & $\surd$ & $\surd$ \\
        & \ourmodel vs. Manual CoT & $0.0642$ & $0.0020 \pm 0.0004$ & $\surd$ & $\surd$ \\
        \midrule
        \multirow{3}{*}{DeepSeek-V3} & \ourmodel vs. Vanilla & $0.0283$ & $0.0079 \pm 0.0009$ & $\surd$ & $\times$ \\
        & \ourmodel vs. Zero-shot CoT & $0.0289$ & $0.0020 \pm 0.0005$ & $\surd$ & $\surd$ \\
        & \ourmodel vs. Manual CoT & $0.0604$ & $0.0020 \pm 0.0003$ & $\surd$ & $\surd$ \\
        \midrule
        \multirow{3}{*}{Claude-3.5-Sonnet} & \ourmodel vs. Vanilla & $0.0465$ & $0.0080 \pm 0.0007$ & $\surd$ & $\times$ \\
        & \ourmodel vs. Zero-shot CoT & $0.0642$ & $0.0021 \pm 0.0005$ & $\surd$ & $\surd$ \\
        & \ourmodel vs. Manual CoT & $0.0754$ & $0.0020 \pm 0.0004$ & $\surd$ & $\surd$ \\
        \midrule
        \multirow{3}{*}{GPT-4o-2024-11-20} & \ourmodel vs. Vanilla & $0.1176$ & $0.0019 \pm 0.0005$ & $\surd$ & $\surd$ \\
        & \ourmodel vs. Zero-shot CoT & $0.0834$ & $0.0020 \pm 0.0005$ & $\surd$ & $\surd$ \\
        & \ourmodel vs. Manual CoT & $0.0909$ & $0.0021 \pm 0.0004$ & $\surd$ & $\surd$ \\
        \midrule
        \multirow{3}{*}{GPT-4-0613} & \ourmodel vs. Vanilla & $0.0866$ & $0.0021 \pm 0.0004$ & $\surd$ & $\surd$ \\
        & \ourmodel vs. Zero-shot CoT & $0.0813$ & $0.0020 \pm 0.0004$ & $\surd$ & $\surd$ \\
        & \ourmodel vs. Manual CoT & $0.0658$ & $0.0020 \pm 0.0005$ & $\surd$ & $\surd$ \\
        \bottomrule
    \end{tabular}
    }
\end{table*}

\subsection{Case Study: Reasoning Steps under Different Settings}\label{app:case-study-steps}
For the first axis of our case study, we present the reasoning steps of Claude under different settings  to quantitatively assess the effectiveness of \ourmodel.
In the example below, there are three main causal events: 1) $E_1$, Alex's miscommunication about the can color; 2) $E_2$, Alex uses A X200R; and 3) $E_3$, Benni unknowingly uses B Y33R following Tom's instruction. The outcome event $O$ is ``The plants dry out.''
It is straightforward to identify that the conjunction of $E_2$ and $E_3$ constitutes a cause of $O$, where each is necessary but not sufficient and thus not a cause of $O$ on its own.
A potential source of confusion arises from $E_3$: one might mistakenly judge it to be an individual cause of $O$ because it appears to violate the norm of Tom's instruction. However, this is incorrect. The relevant norm in this context is the instruction provided by Alex that A X200R is in the green can and that Benni should use the fertilizer from the green can. Since Benni follows this instruction, $E_1$ does not involve a norm violation. Therefore, $E_1$ is merely part of a conjunctive cause of $O$ and does not have the uniquely highest responsibility (by analyzing the temporal order), making the correct answer No (Line 21 of Algorithm \ref{algorithm}).

\begin{tcolorbox}[breakable, colframe=blue!70!black, colback=blue!5, title=Example Sample]
\textbf{Story:} Tom has a huge garden and loves flowers. He employed two gardeners who take care of the plants on his 30 flower beds: Alex and Benni. Both can independently decide on their working hours and arrange who cares for which flower beds. Alex and Benni are very reliable and Tom is satisfied with their work. Nevertheless he wants to optimize the plant growth. Since Tom has read in a magazine that plants grow better when they are fertilized, he decides to let Alex and Benni fertilize his plants. The magazine recommends the use of the chemicals A X200R or B Y33R, since both are especially effective. However, Tom also read that it can damage plants when they are exposed to multiple different types of chemicals. Tom therefore decides that he only wants to use one fertilizer. He goes for A X200R. Tom instructs Alex and Benni to buy the chemical A X200R and to use only this fertilizer. Alex volunteers for buying several bottles of this chemical for Benni and himself. After a few weeks, Tom goes for a walk in his garden. He realizes that some of his plants are much prettier and bigger than before. However, he also realizes that some of his plants have lost their beautiful color and are dried up. That makes Tom very sad and reflective. He wonders whether the drying of his plants might have something to do with the fertilization. He wants to investigate this matter and talks to Alex and Benni. Alex tells him that he followed Tom's instruction: ``I only bought and used the chemical A X200R which I had funneled into the blue can.'' Benni suddenly is startled and says to Alex: ``What? You funneled A X200R into the blue can? But you told me you had funneled it into the green can! That's why I always used the green can!'' Alex replies: ``Did I? Then I am sorry!'' Tom remembers that he had filled B Y33R in a green can - long before he had read about the chemicals in his magazine. He had never used it. So Benni must have accidentally, without knowing it, applied the chemical B Y33R, whereas only Alex applied A X200R. Tom realizes that the plants dried up in the flower beds on which both A X200R and B Y33R were applied by the gardeners.

\textbf{Query:} Did Benni cause the plant to dry out?

\textbf{Gold Answer:} No.
\end{tcolorbox}

\noindent As shown below, the vanilla, zero-shot CoT, and manual CoT settings exhibit similar patterns. On the one hand, Claude demonstrates the ability to identify causal and outcome events without explicit prompting and can also take into account factors such as intention. This indicates that the first two steps of \ourmodel are implicit capabilities of LLMs.
However, Claude falls short in two key areas under these settings: \uline{1) It fails to comprehensively consider all relevant factors.} For instance, sufficiency and normality are entirely overlooked despite their importance in this example. \uline{2) It lacks a structured mechanism to synthesize its causal analysis into a coherent decision.} Consequently, Claude disproportionately relies on naive counterfactual dependence (i.e., $f_{\mathrm{nc}}$), which undermines its reasoning quality.
We have also seen similar patterns from other models, such as DeepSeek-R1.
In contrast, with \ourmodel, Claude performs all key reasoning steps correctly. It first identifies the queried causal event, ``Benni uses green can,'' then accurately infers the factor values for both conjunctive causal events: ``Benni uses green can'' and ``Alex uses blue can.'' Following this, it correctly traverses Algorithm~\ref{algorithm} to reach the final decision. Specifically, ``Benni uses green can'' satisfies $\lnot f_{\mathrm{ac}} \land f_{\mathrm{sc}}$ (Line 15), does not satisfy $f_{\mathrm{n}} \lor f_{\mathrm{i}}$ (Line 16), and is not assigned the uniquely highest responsibility due to occurring later than other causal events (Lines 20-21). Thus, it leads to the correct answer.
Although Claude identifies three additional causal events under this setting, they are inconsequential, as all factors take on False. This indicates that these events can be safely removed from the causal setting without affecting the outcome.

\begin{tcolorbox}[breakable, colframe=blue!70!black, colback=blue!5, title=Claude Vanilla]
To answer this question, we need to carefully analyze the causal chain of events and consider the role of Benni's actions in the outcome.

Let's break down the key points:

\begin{itemize}
    \item[1.] Tom decided to use only one fertilizer (A X200R) to avoid damaging plants with multiple chemicals.
    \item[2.] Alex bought A X200R and funneled it into the blue can, but mistakenly told Benni it was in the green can.
    \item[3.] Benni, believing Alex's misinformation, used the fertilizer from the green can.
    \item[4.] The green can actually contained B Y33R, which Tom had filled long ago and never used.
    \item[5.] The plants dried up in flower beds where both A X200R and B Y33R were applied.
\end{itemize}

Benni's action of using the fertilizer from the green can (unknowingly B Y33R) was a crucial part of the causal chain that led to the plants drying out. If Benni had not applied B Y33R, the plants would not have been exposed to both chemicals, which was the condition that caused the drying.

However, it's important to note that Benni's action was based on misinformation from Alex and was unintentional. Benni believed he was following Tom's instructions and using the correct fertilizer.

Despite the unintentional nature of Benni's action, from a strict causal perspective, Benni's application of B Y33R was a necessary condition for the harmful outcome.

Therefore, the answer is:

Yes

Benni did cause the plants to dry out, albeit unintentionally and as part of a chain of events involving misinformation and misunderstanding.
\end{tcolorbox}

\begin{tcolorbox}[enhanced, breakable, colframe=blue!70!black, colback=blue!5, title=Claude + zero-shot CoT]
To answer this question, let's break down the key events and analyze the causal chain:

1. Tom decided to use only one fertilizer (A X200R) to avoid damaging plants with multiple chemicals.
2. Alex bought A X200R and funneled it into a blue can.
3. Alex mistakenly told Benni that he had put A X200R in the green can.
4. Benni, trusting Alex's information, used the fertilizer from the green can.
5. The green can actually contained B Y33R, which Tom had filled long ago and never used.
6. The plants dried up in flower beds where both A X200R and B Y33R were applied.

Now, let's consider Benni's role in this causal chain:

\begin{itemize}
    \item[1.] Benni did not intentionally use B Y33R; he thought he was using A X200R based on Alex's misinformation.
    \item[2.] Benni's action of using the fertilizer from the green can was a necessary component in the causal chain that led to the plants drying out.
    \item[3.] If Benni had not used the fertilizer from the green can (B Y33R), the plants would not have been exposed to both chemicals and would not have dried out.
    \item[4.] While Benni's action was based on misinformation, he still played a crucial role in the outcome.
    \item[5.] In causal judgment, we often consider whether an action was a necessary part of the causal chain, regardless of the actor's intentions or knowledge.
\end{itemize}

Given these considerations, we can conclude that Benni did cause the plants to dry out, even though it was unintentional and based on misinformation.

Answer: Yes

Benni's actions were a necessary part of the causal chain that led to the plants drying out, even though he was not aware of the true nature of the fertilizer he was using. In causal analysis, the unintended consequences of an action can still be attributed to the actor if their action was necessary for the outcome to occur.
\end{tcolorbox}

\begin{tcolorbox}[breakable, colframe=blue!70!black, colback=blue!5, title=Claude + manual CoT]
To answer this question, let's analyze the key points from the story:

\begin{itemize}
    \item[1.] Tom instructed both gardeners to use only A X200R fertilizer.
    \item[2.] Alex bought A X200R and put it in the blue can.
    \item[3.] Benni mistakenly thought A X200R was in the green can, which actually contained B Y33R.
    \item[4.] The plants dried up in flower beds where both A X200R and B Y33R were applied.
    \item[5.] Benni unknowingly used B Y33R from the green can.
\end{itemize}

Given these facts, we can conclude that Benni did indeed cause the plants to dry out, albeit unintentionally. Here's why:

\begin{itemize}
    \item[1.] Benni's use of B Y33R, combined with Alex's use of A X200R, created the condition that led to the plants drying out.
    \item[2.] If Benni had not used the fertilizer from the green can (B Y33R), the plants would not have been exposed to both chemicals, which was the cause of the damage.
    \item[3.] Even though Benni's actions were accidental and based on misinformation, they still contributed directly to the outcome.
\end{itemize}

Therefore, the answer is: Yes

Benni caused the plants to dry out, even though it was unintentional and based on a misunderstanding. In causal reasoning, the intent is not always necessary for establishing causation; the action and its consequences are what matter most.
\end{tcolorbox}

\begin{tcolorbox}[breakable, colframe=blue!70!black, colback=blue!5, title=Claude + \ourmodel]
\begin{verbatim}
{
    "causal_events": {
        "Tom instructs to use A X200R": {
            "occur": True, 
            "order": 0, 
            "focal": False,
            "sufficient": False, 
            "necessary": False, 
            "halpern_pearl": False, 
            "norm_violated": False, 
            "behavior_intended": False
        }, 
        "Alex buys A X200R": {
            "occur": True, 
            "order": 1, 
            "focal": False,
            "sufficient": False, 
            "necessary": False, 
            "halpern_pearl": False, 
            "norm_violated": False, 
            "behavior_intended": False
        }, 
        "Alex funnels A X200R into blue can": {
            "occur": True, 
            "order": 2, 
            "focal": False,
            "sufficient": False, 
            "necessary": False, 
            "halpern_pearl": False, 
            "norm_violated": False, 
            "behavior_intended": False
        }, 
        "Alex tells Benni about green can": {
            "occur": True, 
            "order": 3, 
            "focal": False,
            "sufficient": False,
            "necessary": True, 
            "halpern_pearl": True, 
            "norm_violated": False, 
            "behavior_intended": False
        }, 
        "Benni uses green can": {
            "occur": True, 
            "order": 4, 
            "focal": True,
            "sufficient": False, 
            "necessary": True, 
            "halpern_pearl": True, 
            "norm_violated": False, 
            "behavior_intended": False
        }, 
        "Alex uses blue can": {
            "occur": True, 
            "order": 4, 
            "focal": False,
            "sufficient": False, 
            "necessary": True, 
            "halpern_pearl": True, 
            "norm_violated": False, 
            "behavior_intended": False
        }
    }, 
    "outcome_event": {
        "Plants dry out": {
            "occur": True, 
            "order": 5
        }
    }
}
\end{verbatim}
Explanation:

No. Based on the story, ``Benni uses green can'' is not more responsible relative to the other events in the list. All three events (``Benni uses green can'', ``Alex tells Benni about green can'', and ``Alex uses blue can'') occur in sequence with ``Alex tells Benni about green can'' happening first (order 3), followed by both ``Benni uses green can'' and ``Alex uses blue can'' occurring simultaneously (both order 4). Since ``Alex tells Benni about green can'' occurs earlier, it would have more responsibility according to the temporal order criterion. Between ``Benni uses green can'' and ``Alex uses blue can'', they have equal responsibility since they occur simultaneously (same order 4).
\end{tcolorbox}

\subsection{Case Study: Seed Samples vs. Generated Samples}\label{app:case-study-samples}
As for the second axis, we present examples comparing seed samples with generated samples to demonstrate that \ourbench is both more challenging and diverse.
The increased complexity and diversity primarily stem from the second step of our generation pipeline (see Appendix~\ref{app:prompt-data}): 
\uline{1) Through addition, new details with spurious correlations may be introduced, potentially distracting models from the actual causal structure.} For instance, in Example 1, the story of the seed sample contains only events relevant to the causal setting. In contrast, the generated sample includes an additional, irrelevant event ``Sophia prepares the main course and appetizers.'' This extraneous detail may mislead models into inferring a spurious correlation between this event and the outcome, thereby increasing the reasoning difficulty.
\uline{2) Through removal, important but non-essential causal cues such as explicit conjunctive statements like ``When $E_1$ and $E_2$ occur, $O$ will occur.'' may be omitted.} For instance, in Example 2, the seed sample explicitly states that ``The machine will short circuit if both the black wire and the red wire touch the battery at the same time.'' In contrast, the corresponding generated sample omits this explicit conjunctive specification, instead conveying it only implicitly. Such omissions increase the reasoning complexity of the sample.
\uline{3) Through reorganization, the structure of the story may become more diverse.} In both Example 1 and 2, the seed sample and its corresponding generated sample differ in multiple aspects, including the story setting, the addition or removal of specific details, and the organization of individual sentences as well as the overall paragraph structure.
As a result, the generated samples in \ourbench pose greater challenges and exhibit higher diversity than their seed counterparts. This increased difficulty is also reflected in model performance: across all models, accuracy on causation queries declines in \ourbench compared with in Big-Bench Hard causal judgment, and the gains attributed to \ourmodel are reduced accordingly.

\begin{tcolorbox}[breakable, colframe=blue!70!black, colback=blue!5, title=Example 1: Addition leads to spurious correlations.]
\texttt{\# Seed Sample.}

\textbf{Story:} Louie is playing a game of basketball, and he made a bet with his friends who are watching on the sidelines. If Louie either makes a layup or makes a 3-point shot during the game, then he'll win \$100. Just when the game started, Louie immediately got the ball at the 3-point line. He looked to the basket, dribbled in, and then made a layup right at the beginning of the game. Louie and his friends continued playing, but as hard as he tried, Louie couldn't make another shot. And then right at the end of the game as the clock was winding down, Louie got the ball at the 3-point line. He looked to the basket, focused his shot, and made a 3-point shot right at the buzzer. Then the game ended. Because Louie would win \$100 if he either made a layup or a 3-point shot, Louie won \$100.

\textbf{Query:} Did Louie win the \$100 bet because he made the layup?

\textbf{Reasoning:}
\begin{verbatim}
{
    "causal_events": {
        "Louie makes a layup": {
            "occur": true, 
            "order": 0, 
            "focal": true,
            "sufficient": true, 
            "necessary": false, 
            "halpern_pearl": false, 
            "norm_violated": false, 
            "behavior_intended": true
        }, 
        "Louie makes a 3-point shot": {
            "occur": true, 
            "order": 1, 
            "focal": true,
            "sufficient": true, 
            "necessary": false, 
            "halpern_pearl": false, 
            "norm_violated": false, 
            "behavior_intended": true
        }
    }, 
    "outcome_event": {
        "Louie wins \$100": {
            "occur": true, 
            "order": 2
        }
    }
}
\end{verbatim}

\textbf{Answer:} Yes \\

\texttt{\# Generated Sample.}

\textbf{Story:} Sophia has invited friends over for a cozy dinner evening at her apartment, where she plans to showcase her cooking skills. The highlight of the night, according to her invitation, is the dessert, which she promised to prepare herself. The party will feature dessert if Sophia either bakes a cake or bakes cookies. Sophia decided to begin the day by focusing on the most intricate dessert: the cake. She spent hours meticulously measuring ingredients, frosting layers, and decorating the top with a swirl of fresh whipped cream. It turned out beautifully, and she proudly displayed it on the dining table. Feeling confident, Sophia dedicated her remaining time to preparing the main course and appetizers. As the guests began sending texts that they were on their way, Sophia realized she had just enough time to bake a quick batch of cookies. They were ready just as her first guests entered the apartment. With both options available, the party was a success, and dessert was served. Since the party would have dessert if Sophia either baked cookies or a cake, there was no shortage of delicious treats to enjoy.

\textbf{Query:} Did the dinner party have dessert because Sophia baked cookies?

\textbf{Reasoning:}
\begin{verbatim}
{
    "causal_events": {
        "Sophia bakes cookies": {
            "occur": true, 
            "order": 1, 
            "focal": true,
            "sufficient": true, 
            "necessary": false, 
            "halpern_pearl": false, 
            "norm_violated": false, 
            "behavior_intended": true
        }, 
        "Sophia bakes a cake": {
            "occur": true, 
            "order": 0,
            "focal": true,
            "sufficient": true, 
            "necessary": false, 
            "halpern_pearl": false, 
            "norm_violated": false, 
            "behavior_intended": true
        }
    }, 
    "outcome_event": {
        "The party has dessert": {
            "occur": true, 
            "order": 2
        }
    }
}
\end{verbatim}

\textbf{Answer:} No
\end{tcolorbox}

\begin{tcolorbox}[breakable, colframe=blue!70!black, colback=blue!5, title=Example 2: Removal leads to less explicit causal cues.]
\texttt{\# Seed Sample.}

\textbf{Story:} A machine is set up in such a way that it will short circuit if both the black wire and the red wire touch the battery at the same time. The machine will not short circuit if just one of these wires touches the battery. The black wire is designated as the one that is supposed to touch the battery, while the red wire is supposed to remain in some other part of the machine. One day, the black wire and the red wire both end up touching the battery at the same time. There is a short circuit.

\textbf{Query:} Did the black wire cause the short circuit?

\textbf{Reasoning:}
\begin{verbatim}
{
    "causal_events": {
        "The black wire touches the battery": {
            "occur": true,
            "order": 0,
            "focal": true,
            "sufficient": false,
            "necessary": true,
            "halpern_pearl": true,
            "norm_violated": false,
            "behavior_intended": false
        },
        "The red wire touches the battery": {
            "occur": true,
            "order": 0,
            "focal": false,
            "sufficient": false,
            "necessary": true,
            "halpern_pearl": true,
            "norm_violated": true,
            "behavior_intended": false
        }
    },
    "outcome_event": {
        "The machine short circuits": {
            "occur": true,
            "order": 1
        }
    }
}
\end{verbatim}

\textbf{Answer:} No \\

\texttt{\# Generated Sample.}

\textbf{Story:} In a water management system for a large greenhouse, there are two valves controlling water supply to a central storage tank. Valve A is part of the main operational circuit, intended to manage daily water flow. Valve B, on the other hand, serves as an emergency bypass valve that should remain shut to prevent mixing excess water into the system. Under normal circumstances, if just one valve is open, the tank functions properly without incident. One morning, during a routine inspection, both Valve A and Valve B are mistakenly left open for hours, allowing water from two separate sources to flow into the tank concurrently. As a result, the tank exceeds its capacity and floods the greenhouse floor, causing damage.

\textbf{Query:} Did Valve A cause the overflow?

\textbf{Reasoning:}
\begin{verbatim}
{
    "causal_events": {
        "Valve A is open": {
            "occur": true,
            "order": 0,
            "focal": true,
            "sufficient": false,
            "necessary": true,
            "halpern_pearl": true,
            "norm_violated": false,
            "behavior_intended": false
        },
        "Valve B is open": {
            "occur": true,
            "order": 0,
            "focal": false,
            "sufficient": false,
            "necessary": true,
            "halpern_pearl": true,
            "norm_violated": true,
            "behavior_intended": false
        }
    },
    "outcome_event": {
        "The storage tank overflows": {
            "occur": true,
            "order": 1
        }
    }
}
\end{verbatim}

\textbf{Answer:} No
\end{tcolorbox}

\section{Prompts}\label{app:prompts}
\subsection{Prompts for the Baselines}

\begin{tcolorbox}[enhanced, breakable, colframe=blue!70!black, colback=blue!5, title=Vanilla]
\texttt{[SYSTEM] You are an expert in the field of actual causality and causal judgment. Given the story and query of a logic-based causal judgment problem, you can effectively solve it.}

\textbf{Story}: \{story\}

\textbf{Query}: \{query\}

Answer (Yes or No?):
\end{tcolorbox}

\begin{tcolorbox}[enhanced, breakable, colframe=blue!70!black, colback=blue!5, title=Zero-shot CoT]
\texttt{[SYSTEM] You are an expert in the field of actual causality and causal judgment. Given the story and query of a logic-based causal judgment problem, you can effectively solve it.}

\textbf{Story}: \{story\}

\textbf{Query}: \{query\}

Let's think step by step.

Answer (Yes or No?):
\end{tcolorbox}

\begin{tcolorbox}[enhanced, breakable, colframe=blue!70!black, colback=blue!5, title=Manual CoT]
\texttt{[SYSTEM] You are an expert in the field of actual causality and causal judgment. Given the story and query of a logic-based causal judgment problem, you can effectively solve it.}

Here we will provide three chain-of-thought examplars, followed by a binary question that needs to be answered.

\textbf{Story}: \{story1\}

\textbf{Query}: \{query1\}

\textbf{Answer (with chain of thought)}: \{answer1\}

\textbf{Story}: \{story2\}

\textbf{Query}: \{query2\}

\textbf{Answer (with chain of thought)}: \{answer2\}

\textbf{Story}: \{story3\}

\textbf{Query}: \{query3\}

\textbf{Answer (with chain of thought)}: \{answer3\}

\textbf{Story}: \{story\}

\textbf{Query}: \{query\}

Answer (Yes or No?):
\end{tcolorbox}

\subsection{Prompts for \ourmodel}

\begin{tcolorbox}[breakable, colframe=blue!70!black, colback=blue!5, title=Causal Setting Establishment]
\texttt{[SYSTEM] You are an expert in the field of actual causality and causal judgment. Given the story and query of a logic-based causal judgment problem, you can effectively assist in solving the problem following the instructions provided.}

\textbf{Story}: \{story\}

\textbf{Query}: \{query\}

Based on the story and query of a logic-based causal judgment problem, establish the causal setting as follows.

\begin{itemize}
    \item[1.] Summarize the \textbf{causal events} and \textbf{outcome event} based on the story and query.
    \begin{itemize}
        \item The causal events should causally contribute to the outcome event.
        \item The causal events should be minimal/atomic and non-overlapping.
    \end{itemize}
    \item[2.] Label the \textbf{occurrences} and \textbf{temporal orders} of events based on the story.
    \begin{itemize}
        \item Label \texttt{occur} as \texttt{true} if an event actually occurs.
        \item Label \texttt{order} as an integer starting from 0, where simultaneous events should share the same \texttt{order}.
    \end{itemize}
    \item[3.] Label the \textbf{focal causal events} based on the query. 
    \begin{itemize}
        \item If the query asks whether a causal event causes the outcome event, the causal event has \texttt{focal = true}.
        \item If the query asks whether an agent causes the outcome event, all causal events reflecting the agent's behaviors have \texttt{focal = true}.
    \end{itemize}
\end{itemize}

Return the causal setting in the following JSON format:
\begin{verbatim}
{
    "causal_events": {
        CAUSAL_EVENT: {
            "occur": True/False,
            "order": ORDER,
            "focal": True/False
        },
        ...
    },
    "outcome_event": {
        OUTCOME_EVENT: {
            "occur": True/False,
            "order": ORDER
        }
    }
}
\end{verbatim}
Return only the JSON, without any extra information.
\end{tcolorbox}

\begin{tcolorbox}[breakable, colframe=blue!70!black, colback=blue!5, title=Causal Factors Analysis]
Based on the story and causal setting, reason about the values of the following causal factors for each causal event.

\begin{itemize}
    \item[1.] \texttt{sufficient = true} if in the story, had a causal event occurred, the outcome event would have occurred, even if other causal events had occurred differently.
    \item[2.] \texttt{necessary = true} if in the story, had a causal event not occurred, the outcome event would not have occurred.
    \item[3.] \texttt{halpern\_pearl = true} if in the story, had a causal event not occurred, the outcome event would not have occurred, while allowing at least a subset of events in the causal setting to remain occurred had a causal event not occurred.
    \item[*] \texttt{sufficient = true}, \texttt{necessary = true}, and \texttt{halpern\_pearl = true} can be satisfied through a path from a causal event to the outcome event, passing through other causal events.
    \item[4.] \texttt{norm\_violated = true} if in the story, a causal event violates norms, such as statistical modes, moral codes, laws, policies, or proper functioning in machines or organisms.
    \item[5.] \texttt{behavior\_intended = true} if in the story, a causal event is an agent's behavior and the agent is aware of the potential consequences of their action and knowingly performs an action that leads to a foreseeable bad outcome.
\end{itemize}

Return the values of factors for causal events in the following JSON format:
\begin{verbatim}
{
    CAUSAL_EVENT: {
        "sufficient": True/False,
        "necessary": True/False,
        "halpern_pearl": True/False,
        "norm_violated": True/False,
        "behavior_intended": True/False
    },
    ...
}
\end{verbatim}
Return only the JSON, without any extra information.
\end{tcolorbox}

\begin{tcolorbox}[enhanced, breakable, colframe=blue!70!black, colback=blue!5, title=Determining $f_{\mathrm{resp}}$ (Line 12)]
Define responsibility as the relative degree (more, less, or equally) to which a causal event causally contributes to the outcome event, relative to other causal events specified. Here, assume responsibility is only determined by normality (\texttt{norm\_violated}) and intention (\texttt{behavior\_intended}).

Return Yes if based on the story, the focal causal event ``\{focal\_event\}'' is equally or more responsible relative to other causal events in the list \{S\_list\}, else No. Then, explain briefly based on the story.
\end{tcolorbox}

\begin{tcolorbox}[enhanced, breakable, colframe=blue!70!black, colback=blue!5, title=Determining $f_{\mathrm{resp}}$ (Line 19)]
Define responsibility as the relative degree (more, less, or equally) to which a causal event causally contributes to the outcome event, relative to other causal events specified. Here, assume responsibility is only determined by temporal order (\texttt{order}).

Return Yes if based on the story, the focal causal event ``\{focal\_event\}'' is more responsible relative to other causal events in the list \{H\_list\}, else No. Then, explain briefly based on the story.
\end{tcolorbox}

\subsection{Prompts for \ourbench}\label{app:prompt-data}
\begin{tcolorbox}[breakable, colframe=blue!70!black, colback=blue!5, title=Data Generation (Stage 1)]
\textbf{Data Example}: \{data\_example\}

In the provided data sample: \texttt{story} is the story background of a logic-based causal judgment problem; \texttt{reasoning} details the reasoning process, including causal/outcome events and their factor values; \texttt{queries} are causality-related queries to the problem, each links to one or more focal causal events in \texttt{reasoning}; \texttt{answers} are Yes/No answers to the queries.

Generate a new data example as follows:
\begin{itemize}
    \item[1.] \texttt{story}: Rewrite with a different real-world setting while preserving the reasoning logic, i.e., only rephrase causal/outcome event descriptions.
    \item[2.] \texttt{queries}: Formulate queries for the new story and identify the corresponding focal causal events.
    \item[3.] \texttt{reasoning}: Generate causal/outcome events and their factors for the new story while keeping factor values unchanged.
    \item[4.] \texttt{answers}: Provide Yes/No answers to the queries.
\end{itemize}

Return the new data example in the same JSON format as the original one.

Return only the JSON, without any extra information.
\end{tcolorbox}

\begin{tcolorbox}[breakable, colframe=blue!70!black, colback=blue!5, title=Data Generation (Stage 2)]
Based on the generated data example, refine the new story to make it more distinctive while keeping \texttt{reasoning} unchanged:
\begin{itemize}
    \item[1.] Increase Details: Add relevant details to enhance the new story setting.
    \item[2.] Remove Details: Eliminate elements that are only relevant to the original story.
    \item[3.] Reorganize Structure: Adjust the narrative flow to avoid mirroring the original structure.
\end{itemize}

Return the modified data example in the same JSON format as the original one.

Return only the JSON, without any extra information.
\end{tcolorbox}

\end{document}